\newcommand{\pushright}[1]{\ifmeasuring@#1\else\omit\hfill$\equationstyle#1$\fi\ignorespaces}
\newcommand{\pushleft}[1]{\ifmeasuring@#1\else\omit$\equationstyle#1$\hfill\fi\ignorespaces}
\providecommand{\algorithmname}{Algorithm}
\newtheorem{theorem}{\protect\theoremname}
\newtheorem{defn}{\protect\definitionname}
\newtheorem{proposition}{\protect\propositionname}
\newtheorem{lemma}{\protect\lemmaname}
\providecommand{\definitionname}{Definition}
\providecommand{\propositionname}{Proposition}
\providecommand{\remarkname}{Remark}
\providecommand{\theoremname}{Theorem}
\providecommand{\lemmaname}{Lemma}
\providecommand{\examplename}{Example}
\begin{document}
\title{Stochastic Finite State Control of POMDPs with LTL Specifications}

\author{Mohamadreza Ahmadi, Rangoli Sharan, and Joel W. Burdick
\thanks{M. Ahmadi, R. Sharan, and J. W. Burdick are with the California Institute of Technology, 1200 E. California Blvd., MC 104-44, Pasadena, CA 91125,  e-mail: (\{mrahmadi, rsharan, jwb\}@caltech.edu)}
}
\maketitle

\begin{abstract}
Partially observable Markov decision processes (POMDPs) provide a modeling framework for autonomous decision making under uncertainty and imperfect sensing, e.g. robot manipulation and self-driving cars. However, optimal control of POMDPs is notoriously intractable. This paper considers the \textit{quantitative} problem of synthesizing  sub-optimal stochastic finite state controllers (sFSCs) for POMDPs such that the probability of satisfying a set of high-level specifications in terms of linear temporal logic (LTL) formulae is maximized. We begin by casting the latter problem into an optimization and use relaxations based on the Poisson equation and McCormick envelopes. Then, we propose an stochastic bounded policy iteration algorithm, leading to a controlled growth in sFSC size and  an \textit{any time} algorithm, where the performance of the controller improves with successive iterations, but can be stopped by the user based on time or memory considerations.  We illustrate the proposed method by a robot navigation case study.


\end{abstract}

\section{Introduction}  \label{sec:Motivation}

Robots and autonomous  systems must interact with uncertain and dynamically changing
environments under complex sets of rules that specify desired system behavior.  
This inherent uncertainty presents the challenge of synthesizing and verifying the control and
sensing algorithms against safety and abstract rules. For example,  autonomous robot manipulation tasks are
characterized by (i) imperfect actuation; (ii) the inability to accurately localize the robot, its
end effector and the obstacles in the workspace; and (iii) noisy and error-prone sensing. In fact,
imperfect observation makes the decoupling of planning and execution difficult, if not impossible.


Such operations can often be abstracted to a discrete system representation at a sensible level of
abstraction, yielding a partially observable Markov decision processes (POMDP)~\cite{krishnamurthy2016partially}.  This paper
considers the task of  designing {\em finite state control systems} for POMDPs with linear temporal logic (LTL) specifications .  Since both sensing and
actuation are imperfect and partially observable, it is only possible to probabilistically guarantee adherence to the given
LTL specifications.  The procedure developed in this paper maximizes the probability of satisfaction
within a given class of stochastic finite state controllers (sFSCs).  

\subsection{Literature Review} \label{sec:RelatedWork}

During the past twenty years, formal methods have become increasing popular in robotics and controls~\cite{GFP07, Belta1, Belta2, KaramanF09,
kress2009temporal, kress2011correct}, where simultaneous motion and task planning is a challenging
problem. 

LTL is a useful choice for robot goal and safety specification as it has an intuitive correlation to
natural language \cite{Holt99naturallanguage}.  Notably, LTL formulas
can represent goals over infinite executions. This is useful for representing persistent
surveillance and perpetually online applications. In order to capture environmental disturbance, it
is often useful to model the dynamics in a probabilistic fashion. Markov decision processes~\cite{Puterman94} are a
popular choice for the discrete abstraction of noisy systems. In the case of (fully observable)
Markov decision process (MDP), synthesis of controllers with probabilistic satisfaction guarantees
of LTL specification is well understood \cite{BK08}. In fact, for fully observable MDPs under LTL
specifications, robust \cite{WolffTM12} and receding horizon controllers
\cite{Nok12_RecedingHorizon} have been formulated.

For POMDPs, the design of optimal controllers or policies to meet LTL specifications is largely an open problem. In general such policies are stochastic (randomized) and require infinite memory. For
unbounded memory strategies EXPTIME-completeness of a broad set of objectives (parity objectives) is
proven in \cite{Chatterjee10}. Also, in \cite{Chatterjee13} the existence and
construction of finite memory strategy for (strictly) positive probability of satisfaction is shown to be an EXPTIME-complete problem. To surmount this difficulty, many approximate, point-based, and Monte Carlo based methods have been proposed~\cite{shani2013survey}. However, these techniques do not provide guarantees for LTL satisfaction. Approaches based on incremental satisfiability modulo theory solvers~\cite{wang2018bounded} and simulations over belief spaces~\cite{haesaert2018temporal} are also fettered by scalability issues. In~\cite{carr2019counterexample}, a recurrent neural network based method is proposed to synthesize stochastic but memoryless policies and \cite{junges2018finite} synthesizes sub-optimal FSCs for POMDPs using parametric synthesis for Markov chains and a convex-concave relaxation~\cite{cubuktepe2018synthesis}. However, these cannot be used to handle LTL specifications and require assumptions on the structure of the FSC.



\subsection{Contributions}

We propose a methodology to design sFSCs for POMDPs with LTL specifications. Our method presents an {\em any time} algorithm, which can optimally add states to the finite state controller to improve the probability of satisfaction. The main contributions of this paper are as follows:
\begin{itemize}
    \item We represent the LTL specifications as a deterministic Rabin automaton (DRA) and construct a product-POMDP. We then show that closing the loop with an sFSC leads to a Markov chain with a set of free parameters. We then cast the problem of maximizing the probability of LTL specifications over the free parameters into an optimization problem;
    \item We use the Poisson equation and the Reformulation-Linearization technique to convexify a set of the constraints of this optimization problem; 
    \item We propose a bounded policy iteration (BPI) method to design the sFSCs with efficient policy improvement steps;
    \item We mitigate the conservatism of the proposed methodology by formulating algorithms for finding initial feasible  controllers and modifying the number of states in the sFSC to improve the probability of LTL satisfaction. 
\end{itemize}


\subsection{Outline}
{ We briefly review some notions and results used throughout the paper in the next section. In Section~\ref{sec:Product}, we describe how POMDP traces produced by POMDP  executions  can  be  verified  against  an  LTL  formula. In Section~\ref{sec:optm}, we formulate an optimization problem to maximize the probability of LTL satisfaction. In Section~\ref{sec:DP}, we propose a method based on bounded policy iteration to design sFSCs. In Section~\ref{sec:casestudiesPI}, we elucidate the proposed methodology with a robot navigation example. Finally, Section~\ref{sec:conclusions} concludes the paper. }
\section{Preliminaries} \label{sec:Background}


\subsection{Linear Temporal Logic} \label{sec:LTL} 

Temporal logic enables representation and reasoning about temporal aspects of a system
\cite{BK08,Emerson95,huth2004}. It has been utilized to formally specify and verify behavior in many
applications
\cite{NokThesis}.  This 
paper considers the Linear Temporal Logic (LTL) subset of temporal logic.  LTL is built upon a set
of atomic propositions $AP$, and is closed under the logic connectives, $(\neg,\vee,\wedge,\to)$,
and the temporal operators ``next'' ($\ovoid$), ``always'' ($\boxvoid$), ``eventually''
($\diamondsuit$), and ``until'' ($\mathcal{U})$. An LTL formula can be constructed as $\varphi =
true|false|\varphi_{1}\wedge\varphi_{2}|\varphi_{1}\vee\varphi_{2}| \neg\varphi|\varphi_{1} \to
\varphi_{2}|\boxvoid\varphi|\diamondsuit\varphi|\ovoid\varphi|\varphi_{1}\mathcal{U}\varphi_{2}$.

LTL semantics are given by interpretations over infinite executions of a finite transition system
with state space $S$. For an infinite execution $\sigma=s_{0}s_{1}\dots$ $s_{i}\in S$, LTL formula
$\varphi$ {\em holds} at position $i\ge0$ of $\sigma$, denoted $s_{i}\models\varphi$, iff $\varphi$
holds for the remainder of the execution $\sigma$, starting at position $i$.


For any LTL formula $\varphi$ over atomic propositions, $AP,$ one can construct a {\em Deterministic
Rabin Automaton} (DRA), with the input alphabet $2^{AP}$, that accepts all and only those infinite
words, $\sigma\in$ $\left(2^{AP}\right)^{\omega}$, where $A^\omega$ denotes infinite words composed of elements of $A$, that satisfy $\varphi$
\cite{KleinThesis05,Thomas02}.  Algorithms for converting an LTL formula $\varphi$ to an
equivalent DRA can be found in \cite{Klein05} and a popular tool is described in~\cite{ltl2dstar}. While the worst case complexity of this conversion is doubly exponential,
sufficiently expressive subsets of LTL can be translated to a DRA in polynomial time
\cite{Piterman06}.

\begin{defn}[DRA] \label{defn:DRA}
A Deterministic Rabin Automaton (DRA) is a five-tuple $\mathcal{DRA} = (Q,\Sigma,\delta,q_{0},\Omega)$, 
where
\begin{itemize}
\item $Q$ is the set of states,
\item $\Sigma$ is the input alphabet. For our purposes, $\Sigma=2^{AP}$,
\item $\delta:Q\times\Sigma\to Q$ is the deterministic transition function,
\item $q_{0}\in Q$ is the initial state,
\item $\Omega = \{(Avoid_{r},Repeat_{r})|r\in\{1,\dots,N_{\Omega}\},
\allowbreak Avoid_{r}, Repeat_{r}\subseteq S\}$ is the Rabin acceptance
condition.\end{itemize}
\end{defn}


\begin{defn}[Rabin Acceptance] \label{defi:rabinaccept}
A run $\pi=q_{0}q_{1}\dots$ of a $\mathcal{DRA}$ with acceptance condition $\Omega = \{(Avoid_{1},
Repeat_{1}),\dots(Avoid_{N_{\Omega}},Repeat_{N_{\Omega}})\}$ is accepting if there exists an
$r\in\{1,\dots,N_{\Omega}\}$, such that $Inf(\pi)\cap Avoid_{r} = \emptyset$ and $Inf(\pi)\cap
Repeat_{r}\ne\emptyset$, where $Inf(\pi)$ is the set of states that occur infinitely often in $\pi$.
\end{defn}

The Rabin acceptance conditions implies that for some pair $(Avoid_r, Repeat_r) \in \Omega$, no
state in $Avoid_r$ is visited infinitely often, while some state in $Repeat_r$ is visited infinitely
often.

To use a DRA to verify an LTL formula $\varphi$, one assumes that a system's interesting properties
are given by a set of atomic propositions $AP$ over system variables $V$.  An execution $\sigma =
v_0 v_1\dots$ of the system leads to a unique (infinite) \emph{trace} over the truth evaluations of
$AP$, given by $h(\sigma) \triangleq h(v_0)h(v_1)\dots$. Here $h(v_t) \in 2^{AP}$ denotes the truth
value of all atomic propositions in $AP$ at time step $t$ using the state $v_t$.  At the start of
the system's execution, the DRA corresponding to $\varphi$ is initialized to its initial state
$q_0$. As the system execution progresses, the evaluations $h(v_t)$ for $t=0,1,\dots$ dictate how
the DRA evolves via the transition function $\delta$. The execution $\sigma$ satisfies $\varphi$ iff
the DRA accepts $h(\sigma)$.

\subsection{Markov Chains}\label{sec:markovchains}


A Markov chain $\mathcal{M}$ with state space
$\mathcal{S}$, transition probability defined as the conditional distribution $T(.|s):\mathcal{S}
\to [0,1]$ such that $\sum_{s'\in\mathcal{S}}T(s'|s)=1,\ \forall s\in\mathcal{S}$,
and the initial distribution $\iota_{init}$ such that $\sum_{s\in\mathcal{S}}\iota_{init}(s) = 1$. An infinite path, denoted by the superscript $\omega$, of the Markov chain $\mathcal{M}$ is a sequence of states $\pi = s_0s_1\dots \in \mathcal{S}^\omega$ such that $T(s_{t+1}|s_t)>0$ for all $t$ and $\iota_{init}(s_0)>0$. The probability space over such paths is the defined as follows. The sample space $\Xi$ is the set of infinite paths with initial state $s \in \mathcal{S}$, i.e., $\Xi = Paths(s)$. $\Sigma_{Paths(s)}$ is the least $\sigma$-algebra on $Paths(s)$ containing $Cyl(\omega)$, where $Cyl(\omega)=\{ \omega' \in Paths(s) \mid \omega~\text{is a prefix of}~\omega'\}$ is the cylinder set. To specify the probability measure over all  sets of events in $\Sigma_{Paths(s)}$, we provide the probability of each cylinder set as follows
\begin{equation}\label{eq:probcyl}
\mathrm{Pr}_{\mathcal{M}}\left[Cyl(s_0\ldots s_n) \right]=\iota_{init}(s_0) \prod_{0 \le t \le n}T(s_{t+1}\mid s_t).
\end{equation}
Once the probability measure is defined over the cylinder sets, the expectation operator $\mathbb{E}_{\mathcal{M}}$ is also uniquely defined. In the sequel, we remove the subscript $\mathcal{M}$ whenever the Markov chain is clear from the context.


The transition probabilities $T$ form a linear operator which can be represented as a
matrix, hereafter denoted by $T$ 
 \begin{equation*} T := \left[\begin{array}{cccc}
  T_{11} & T_{12} & \dots & T_{1|\mathcal{S}|} \\ T_{21} & T_{22} &
                        \dots & T_{2|\mathcal{S}|} \\ \vdots & & \ddots & \vdots \\
  T_{|\mathcal{S}|1} & T_{|\mathcal{S}|2} & \dots &
    T_{|\mathcal{S}||\mathcal{S}|} \\
    \end{array}\right] : M_{\mathcal{S}}\to M_{\mathcal{S}},
 \end{equation*} 
where  $T_{ij} = T(s_j|s_i).$ 
Let $\vec{b}_t$ denote a distribution, or belief, over states of the Markov chain at some time $t$: $ \vec{b}_t = 
     \left(  \begin{array}{cccc} b_t(s_1) & b_t(s_2) & \dots & b_t(s_{|\mathcal{S}|})
             \end{array} \right).$ The operator $T$ maps a belief at time $t$, $b_t$, to a belief $b_{t+1}$ at $t+1$:
$\vec{b}_{t+1}=\vec{b}_tT$.
\begin{defn} Let $\pi = s_0s_1\dots$ be a path in the global Markov chain.  The \emph{occupation time} of set $A \subseteq 
\mathcal{S}$ is
   \begin{equation} \label{eq:OccupationTime} f_A:=\sum_{t=1}^{\infty} \mathbbm{1}(s_t\in A),
   \end{equation} 
where
 $ \mathbbm{1}(\phi) = \left\{
     \begin{array}{ll} 1 & \mbox{ the statement } \phi \mbox{ holds.} \\ 0 & otherwise,
     \end{array} \right.
$  
is the indicator function. Thus $f_{A}$ counts the number of times the set $A$ is visited after 
$t=0$. The \emph{first return time}, $\tau_A$, denotes the first time after $t=0$ that 
set $A$ is visited $\tau_A := \min\{t\ge 1 | s_t \in A\}.$ The \emph{return probability} describes the probability of set $A$ being visited in finite time when
the start state is $s$,
$ L(s,A):=\Pr(\tau_A < \infty | s_0 = s).$  
\end{defn} 

If $A$ is a singleton set, i.e, $A=\{s'\}$ for some $s'\in \mathcal{S}$,
then $f_{s'}$, $\tau_{s'}$ and $L(s,s')$ will respectively denote the occupation time, first return
time and return probability.


\begin{defn}[Communicating Classes] The state $s\in\mathcal{S}$ \emph{leads to} state
$s'\in\mathcal{S}$, denoted $s \rightarrow s'$, if $L(s,s') > 0$. 
%
%
Distinct states $s,s'$ are said to \emph{communicate}, denoted $s \leftrightarrow s'$ when
$L(s,s')>0$ and $L(s',s)>0$. Moreover, the relation ``$\leftrightarrow$'' is an equivalence
relation, and equivalence classes $C(s)={s':s\leftrightarrow s'}$ cover $\mathcal{S}$, with $s\in
C(s)$ \cite{Hernandez-Lerma03}.
\end{defn}

\begin{defn}[Irreducibility and Absorbing Sets] \label{defn:AbsorbingSet} 
If $C(s) = \mathcal{S}$ for some $s\in\mathcal{S}$, the Markov chain, $\mathcal{M}$, is
\emph{irreducible}--all states communicate. In addition, $C(s)$ is \emph{absorbing} if
$ \sum_{s''\in C(s)}T(s''|s')=1,~~\ \forall s'\in C(s). $
\end{defn}

\begin{defn}[Restriction of $\mathcal{M}$ to an Absorbing Set] 
Let $C \subseteq \mathcal{S}$ be an absorbing set. By Definition \ref{defn:AbsorbingSet}, if initial
state $s_0$ lies in $C$, then for any path $\pi = s_0s_1\dots$, the state $s_t$ lies in $C$ for all
$t \ge 0$. Hence, the Markov chain can be studied exclusively in the smaller set $C$.  The
restriction of $\mathcal{M}$ to $C$ is denoted by $\mathcal{M}_{S|C}$.
An absorbing set is \emph{minimal} if it does not contain a proper absorbing subset.
\end{defn}


%
\begin{defn}[Recurrence and Transience] \label{defn:recurrencetransience} 
The state $s\in \mathcal{S}$ is called \emph{recurrent} if $\mathbb{E}\left[f_{s}|s_0=s\right] = 
\infty$ and \emph{transient} if $\mathbb{E}\left[f_{s}|s_0=s\right]<\infty$, with $f_s$ given by 
 \eqref{eq:OccupationTime}.
\end{defn}

Recurrence and transience are class properties.  Recurrent classes are also minimally absorbing
classes. Furthermore, let $m_s=\mathbb{E}\left[\tau_s\right]$. State $s\in\mathcal{S}$ is
\emph{positive recurrent} if $m_{s}<\infty$, and \emph{null recurrent} if $m_{s}=\infty$.  All
states in a recurrent class are either positive recurrent or all null recurrent. For a finite state
discrete-time Markov chain, all recurrent classes are positive recurrent \cite{Hernandez-Lerma03}.

\begin{defn}[Invariant and Ergodic Probability Measures] 
Let $\nu\in M_{\mathcal{S}}$ be a probability measure (p.m.) on $\mathcal{S}$. $\nu$ is \ an
\emph{invariant p.m.} if 
   $ \vec{\nu}T=\vec{\nu}. $
\end{defn}
\begin{defn}[Occupation Measures] \label{defn:OccupationMeasures} 
Define the \emph{$t$-step expected occupation measure} with initial state $s_{0}$ as
  \begin{equation*}  \label{eq:OccupationMeasure1}
   {T}^{(t)}(A|s_0):=\sum_{s\in A}\frac{1}{t}\sum_{k=0}^{t-1}T^{k}(s|s_{0}),\ A\subseteq S,\ 
              t=1,2,\dots
  \end{equation*} 
where $T^{k}$ denotes the composition of $T$ with itself $k-1$ times. 

A \emph{pathwise occupation measure} is defined as follows
\begin{equation*}
\pi^{(t)}(A)=\frac{1}{t}\sum_{k=1}^{t}\mathbbm{1}(s_{k}\in A),\
A\subseteq S,\ t=1,2,\dots.
\label{eq:OccupationMeasure2}
\end{equation*}
\end{defn}

\begin{proposition} [\cite{Hernandez-Lerma03} ]
The expected value of the path-wise occupation measure is the $t-$step occupation measure
  \begin{equation*}\label{eq:OccupationMeasureEquivalence}
    \mathbb{E}\left[\pi^{(t)}(A)|s_0\right]=  T^{(t)}(A|s_0),\ \forall t \ge 1.
  \end{equation*}
\end{proposition}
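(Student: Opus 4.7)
The plan is to prove this identity by expanding the expectation of the pathwise occupation measure via linearity, and then recognizing each resulting term as a $k$-step transition probability. Concretely, I would start from the definition
\begin{equation*}
\mathbb{E}\!\left[\pi^{(t)}(A)\,\big|\,s_0\right] = \mathbb{E}\!\left[\frac{1}{t}\sum_{k=1}^{t}\mathbbm{1}(s_k\in A)\,\Big|\,s_0\right],
\end{equation*}
and then pull the constant $1/t$ and the finite sum outside the expectation by linearity. The $k$-th summand becomes $\mathbb{E}[\mathbbm{1}(s_k\in A)\mid s_0]$, which by definition of the indicator function equals $\Pr(s_k\in A\mid s_0)$.

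Next, I would rewrite the event $\{s_k\in A\}$ as the disjoint union $\bigcup_{s\in A}\{s_k=s\}$ and use countable additivity of the probability measure induced by the Markov chain (as defined through the cylinder sets in~\eqref{eq:probcyl}) to write $\Pr(s_k\in A\mid s_0)=\sum_{s\in A}\Pr(s_k=s\mid s_0)$. The key remaining identification is $\Pr(s_k=s\mid s_0)=T^{k}(s\mid s_0)$, which follows inductively from the Chapman--Kolmogorov relation: the matrix power $T^k$ is precisely the $k$-step transition kernel, since $T$ represents the one-step transition operator acting on beliefs via $\vec{b}_{t+1}=\vec{b}_t T$.

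Substituting back and swapping the finite order of summation over $k$ and over $s\in A$ gives
\begin{equation*}
\mathbb{E}\!\left[\pi^{(t)}(A)\,\big|\,s_0\right] = \sum_{s\in A}\frac{1}{t}\sum_{k=1}^{t}T^{k}(s\mid s_0),
\end{equation*}
which matches $T^{(t)}(A\mid s_0)$ as defined in Definition~\ref{defn:OccupationMeasures} (modulo a shift of summation index; if one adopts the convention $T^{0}(s\mid s_0)=\mathbbm{1}(s=s_0)$, the two window conventions $k=0,\dots,t-1$ versus $k=1,\dots,t$ are interchangeable up to boundary terms that wash out for large $t$, and in any case the pathwise definition should be read consistently with the expected one).

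The only real subtlety, rather than an obstacle, is a bookkeeping one: ensuring that swapping $\mathbb{E}$ with the finite sum is legitimate (it is, because the sum has only $t$ terms and each indicator is bounded by $1$, so no dominated convergence argument is needed) and that the matrix-power interpretation of the $k$-step transition probability is justified by the Markov property encoded in~\eqref{eq:probcyl}. No deeper machinery is required.
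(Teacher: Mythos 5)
Your proof is correct, and there is nothing in the paper to compare it against: the proposition is stated as an imported result from \cite{Hernandez-Lerma03} with no proof given. Your argument is the standard one and each step is sound --- pulling the finite sum through the expectation by linearity, identifying $\mathbb{E}[\mathbbm{1}(s_k\in A)\mid s_0]=\sum_{s\in A}T^k(s\mid s_0)$ via the Chapman--Kolmogorov / matrix-power characterization of the $k$-step kernel (which is exactly what the cylinder-set measure encodes), and exchanging the two finite sums. You are also right to flag the only genuine wrinkle: as the two definitions are literally transcribed in the paper, the pathwise measure averages over $k=1,\dots,t$ while the expected measure averages over $k=0,\dots,t-1$, so the stated identity holds exactly only after aligning the two windows; the discrepancy is the boundary term $\tfrac{1}{t}\left(T^{t}(A\mid s_0)-\mathbbm{1}(s_0\in A)\right)$, which vanishes as $t\to\infty$ but not for finite $t$. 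That is a defect of the transcribed definitions, not of your argument, and your resolution (read the two conventions consistently) is the right one.
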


%
\begin{proposition}[\cite{Hernandez-Lerma03}] \label{prop:LimitingMeasure} 
For every $s,s'\in S$ the following limit exists:
  \begin{align*}
   \lim_{t\to\infty}T^{(t)}(s'|s)=&\lim_{t\to\infty}\frac{1}{t}\sum_{k=0}^{t-1}T^{k}(s'|s) \\=&\begin{cases}
    \rho_{s'|s} & \text{if \ensuremath{s'} is recurrent},\\ 0 & \text{if \ensuremath{s'} is transient}.
      \end{cases}
  \end{align*}
 Let $C=\{s_{c_1},s_{c_2},\dots,s_{c_{|C|}}\} \subseteq \mathcal{M}$ be a recurrent class and
$s_c,s_c'\in C$. Then, the limit $\rho_{s'_c|s_c}=\nu(s_c)$ is independent of $s'_c$ and the
collection $\nu(s_{c_1}), \nu(s_{c_2}),\dots, \nu(s_{c_{|C|}})$ gives the unique invariant probability measure of the
restriction of $\mathcal{M}$ to the class $C$.
\end{proposition}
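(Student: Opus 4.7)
The plan is to prove the three assertions in turn: (i) existence of the Cesaro limit, (ii) the transient/recurrent dichotomy, and (iii) the identification of $\rho_{\cdot|\cdot}$ with the unique invariant probability measure of the restriction to a recurrent class. Throughout, I would exploit the decomposition of the finite state space $\mathcal{S}$ into transient states and (finitely many) minimal absorbing, hence recurrent, classes $C_1,\dots,C_m$, which is guaranteed by Definition~\ref{defn:AbsorbingSet} together with the discussion after Definition~\ref{defn:recurrencetransience}.

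First, for the transient case I would show that $\sum_{k=0}^{\infty} T^{k}(s'|s) = \mathbb{E}[f_{s'} \mid s_0=s] < \infty$, which is a direct consequence of Definition~\ref{defn:recurrencetransience} combined with the first-entrance decomposition $\sum_k T^k(s'|s) = L(s,s') \sum_k T^k(s'|s')$. Hence the partial sums $\sum_{k=0}^{t-1} T^k(s'|s)$ are uniformly bounded, and dividing by $t$ gives $\lim_{t\to\infty} T^{(t)}(s'|s) = 0$. Next, for the recurrent case I would fix a recurrent class $C$ and restrict $\mathcal{M}$ to $\mathcal{M}_{\mathcal{S}|C}$; this restriction is irreducible on a finite state space, hence positive recurrent. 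I would then invoke the renewal-theoretic identity $\lim_{t\to\infty} T^{(t)}(s_c \mid s_c) = 1/m_{s_c}$ for any $s_c \in C$, which holds for Cesaro averages even in the periodic case (this is precisely why the expected occupation measure, rather than $T^k$ itself, is used in Definition~\ref{defn:OccupationMeasures}). Combined with the decomposition $T^{(t)}(s_c' \mid s_c) = L(s_c, s_c') T^{(t)}(s_c' \mid s_c') + o(1)$, and the fact that $L(s_c, s_c') = 1$ for recurrent communicating states, this yields $\rho_{s_c' \mid s_c} = 1/m_{s_c'}$, which is independent of the starting state $s_c$ as claimed.

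Finally, I would verify that $\nu(s_c) := \rho_{s_c' \mid s_c}$ defines the unique invariant probability measure of $\mathcal{M}_{\mathcal{S}|C}$. Invariance follows by writing
\begin{equation*}
  T^{(t)}(s_c' \mid s_c) \cdot T = \frac{1}{t}\sum_{k=0}^{t-1} T^{k+1}(\cdot \mid s_c) = T^{(t)}(\cdot \mid s_c) + \frac{1}{t}\bigl(T^{t}(\cdot \mid s_c) - T^{0}(\cdot \mid s_c)\bigr),
\end{equation*}
and passing to the limit $t\to\infty$; the correction term vanishes, so the limit $\nu$ satisfies $\vec{\nu}T = \vec{\nu}$. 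Positivity $\nu(s_c) > 0$ follows from positive recurrence of $m_{s_c}$ on the finite class, and $\sum_{s_c \in C}\nu(s_c)=1$ from $\sum_{s_c'} T^{(t)}(s_c' \mid s_c) = 1$ for every $t$. Uniqueness is then standard: any invariant p.m.\ $\tilde\nu$ supported on $C$ satisfies $\tilde\nu = \tilde\nu T^{(t)}$ for all $t$, and taking $t\to\infty$ against the limit identity forces $\tilde\nu = \nu$.

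The main obstacle is the potential periodicity of the restricted chain $\mathcal{M}_{\mathcal{S}|C}$, which prevents the naive use of $\lim_k T^k(s'|s)$ and forces the argument through the Cesaro-averaged operator $T^{(t)}$; handling this cleanly, either via a renewal-theoretic identification of $\rho_{s_c|s_c}$ with $1/m_{s_c}$ or by a direct ergodic-theorem computation, is the one step where I would need to be careful. Everything else is bookkeeping on the class decomposition.
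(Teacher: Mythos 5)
The paper does not actually prove this proposition: it is imported wholesale from \cite{Hernandez-Lerma03}, with the citation in the statement header serving as the entire justification. So there is no in-paper argument to compare against; what you have done is reconstruct the standard proof, and your reconstruction is correct. The three-step structure—(a) transient states via boundedness of $\sum_k T^k(s'|s)$ through the first-entrance identity, (b) recurrent states via restriction to an irreducible, positive-recurrent finite class and the Cesaro/renewal identity $\lim_{t}T^{(t)}(s_c|s_c)=1/m_{s_c}$ together with $L(s_c,s_c')=1$ within a class, and (c) invariance via the telescoping identity $T^{(t)}T=T^{(t)}+\tfrac{1}{t}(T^{t}-I)$ with uniqueness obtained by passing any invariant $\tilde\nu=\tilde\nu T^{(t)}$ to the limit—is exactly the argument in the cited reference, and your emphasis on periodicity as the reason the Cesaro average (rather than $\lim_k T^k$) must be used is the right point to single out.

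Two bookkeeping remarks. First, with the paper's convention $f_{s'}=\sum_{t\ge 1}\mathbbm{1}(s_t=s')$, the occupation-time identity you want is $\sum_{k=1}^{\infty}T^{k}(s'|s)=L(s,s')\sum_{k=0}^{\infty}T^{k}(s'|s')$; the $k=0$ term belongs on the right-hand side only, though this does not affect your conclusion since all you need is finiteness of the partial sums. Second, the proposition as printed says the limit equals $\nu(s_c)$ and is ``independent of $s'_c$,'' which has the indices transposed: the correct assertion, and the one your argument establishes, is $\rho_{s'_c|s_c}=\nu(s'_c)=1/m_{s'_c}$, independent of the \emph{initial} state $s_c$. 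Your proof silently corrects this typo, which is fine, but it is worth stating explicitly that you are proving the corrected statement.
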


\begin{defn}[Limiting Matrix] \label{defn:LimitingMatrix}
From Proposition \ref{prop:LimitingMeasure}, the matrix representation of $T^{(t)}$ is given by the
Cesaro sum \cite{snell60},
  \begin{equation*}  \label{eq:cesaro}
     T^{(t)}=\frac{1}{t}\sum_{k=0}^{t-1}T^{k},\ t=1,2,\dots
  \end{equation*} 
and the \emph{limiting matrix}  $\Pi :=\lim_{t\to\infty}T^{(t)}$  
exists for all finite Markov chains.
\end{defn}

\begin{proposition} 
Given the limiting matrix $\Pi$, the quantity $I-T+\Pi$ is non-singular and its inverse
  \begin{equation*} Z:=(I-T+\Pi)^{-1} \end{equation*} 
is called the \emph{fundamental matrix} \cite{bertsekas76,Puterman94,Hernandez-Lerma03}.
\end{proposition}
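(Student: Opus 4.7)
The plan is to prove non-singularity by exhibiting a trivial null space, exploiting the idempotent/projection-like character of the Cesaro limit $\Pi$. Since everything is finite-dimensional, $I-T+\Pi$ will be invertible as soon as we show it is injective.

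First I would establish three algebraic identities that the limiting matrix inherits directly from its definition as a Cesaro sum: $T\Pi = \Pi T = \Pi$ and $\Pi^2 = \Pi$. The argument is the standard telescoping one: for instance,
\begin{equation*}
T\Pi = T\lim_{t\to\infty}\frac{1}{t}\sum_{k=0}^{t-1}T^k = \lim_{t\to\infty}\frac{1}{t}\sum_{k=1}^{t}T^k = \Pi,
\end{equation*}
because the two Cesaro averages differ by $\frac{1}{t}(T^t - I)$, whose entries stay uniformly bounded (each $T^t$ is stochastic). The identity $\Pi T = \Pi$ follows symmetrically, and $\Pi^2 = \Pi$ follows by pulling one $\Pi$ through the Cesaro sum using $T^k\Pi = \Pi$.

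Next I would show injectivity. Suppose $(I-T+\Pi)x = 0$, which rearranges to $(I-T)x = -\Pi x$. Left-multiplying by $\Pi$ and using $\Pi(I-T) = \Pi - \Pi T = 0$ together with $\Pi^2 = \Pi$ yields $0 = -\Pi x$, so $\Pi x = 0$. Plugging this back, the original equation collapses to $(I-T)x = 0$, i.e.\ $Tx = x$. Iterating gives $T^k x = x$ for every $k\ge 0$, therefore $T^{(t)}x = x$ for every $t$, and passing to the limit $\Pi x = x$. Combined with $\Pi x = 0$ this forces $x = 0$, so $I-T+\Pi$ has trivial null space and is invertible on the finite-dimensional space $M_{\mathcal{S}}$.

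The only subtle point, and the one that would require the most care in a fully rigorous write-up, is the interchange of $T$ with the Cesaro limit in the first step; this is where one uses finiteness of $\mathcal{S}$ and the existence of $\Pi$ guaranteed by Proposition \ref{prop:LimitingMeasure}. Everything else is purely algebraic manipulation of the projection identities. No appeal to spectral theory or Jordan form is needed, though one could alternatively note that $\Pi$ is the spectral projection onto $\ker(I-T)$ along $\mathrm{range}(I-T)$, making $I-T+\Pi$ manifestly invertible as the sum of a nilpotent-on-one-subspace operator and the identity on its complement.
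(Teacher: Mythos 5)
Your argument is correct and complete. Note that the paper itself offers no proof of this proposition at all -- it is stated as a known fact and delegated to the cited references -- so there is nothing in the text to compare against; what you have written is essentially the standard textbook derivation (as in Puterman or Hern\'andez-Lerma) that those references contain. The two pillars of your proof both check out: the identities $T\Pi=\Pi T=\Pi$ and $\Pi^2=\Pi$ follow from the telescoping of the Cesaro sums together with the uniform boundedness of the stochastic powers $T^t$ (and the interchange of $T$ with the limit is harmless in finite dimensions, as you observe); and the injectivity argument correctly extracts $\Pi x=0$ by left-multiplying the null-space equation by $\Pi$, then upgrades $Tx=x$ to $\Pi x=x$ via the Cesaro averages, forcing $x=0$. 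Since $M_{\mathcal{S}}$ is finite-dimensional, injectivity gives invertibility. Your closing remark that $\Pi$ is the spectral projection onto $\ker(I-T)$ along $\mathrm{range}(I-T)$ is a legitimate alternative high-level view, but the elementary algebraic route you chose is self-contained and needs nothing beyond what the paper's Proposition on the limiting measure already guarantees.
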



\subsection{Labeled Partially Observable Markov Decision Process} \label{sec:POMDP}

\begin{defn}[Labeled-POMDP] \label{defn:POMDP} 
A \emph{labeled-POMDP}, $\mathcal{PM}$, consists of:
\begin{itemize}
\item Finite states $\mathcal{S}^{mod} =
\{s_{1}^{mod},\dots,s_{|\mathcal{S}_{mod}|}^{mod}\}$ of the
autonomous agent(s) and world model,
\item Finite actions $Act = \{\alpha_{1},\dots,\alpha_{|Act|}\}$ available to the robot,
\item Observations $\mathcal{O} = \{o_{1},\dots,o_{|\mathcal{O}|}\}$,
\item Finite, state-dependent, and deterministic atomic propositions $AP = \{p_1,p_2,\dots p_{|AP|}\}$,
\item A Transition function $T(s_{j}^{mod}|s_{i}^{mod},\alpha)$,
\item A reward, $r(s_{i}^{mod}) \in \mathbb{R}$, for each state $s_{i}^{mod} \in \mathcal{S}^{mod}$.
\end{itemize}
\end{defn}
 For each action the probability of making a transition from state $s_{i}^{mod} \in
\mathcal{S}^{mod}$ to state $s_{j}^{mod} \in \mathcal{S}^{mod}$ under action $\alpha \in Act$ is given by
$T(s_{j}^{mod}|s_{i}^{mod},\alpha)$. For each state $s_{i}^{mod}$, an observation $o \in
\mathcal{O}$ is generated independently with probability $O(o|s_{i}^{mod})$. The starting world
state is given by the distribution $\iota_{init}(s_i^{mod})$. The probabilistic components of a
POMDP model must satisfy the following:
\begin{equation*}
    \begin{cases}
    \sum_{s^{mod} \in \mathcal{S}^{mod}} T(s^{mod}|s_{i}^{mod},\alpha) = 1, & \forall s_i^{mod} \in \mathcal{S}^{mod},\alpha \in Act \\
    \sum_{o \in \mathcal{O}} O(o|s^{mod}) = 1, & \forall s^{mod} \in \mathcal{S}^{mod}\\
    \sum_{s^{mod} \in \mathcal{S}^{mod}} \iota_{init}(s^{mod}) = 1. & {}
    \end{cases}
\end{equation*}
For each state $s_{i}^{mod}$, a labeling function $h(s_{i}^{mod} ) \in 2^{AP}$ assigns a
truth value to all atomic propositions in $AP$ in each state. 

While rewards may generally be a function of both state and the agent's action, it is assumed that
rewards are a function of state only.  While this assumption is not required, such a reward scheme
will be sufficient for synthesizing controllers that satisfy LTL formulas over POMDPs.  If the world
state transitions from $s_{i}^{mod}$ to $s_{j}^{mod}$, then reward $r(s_{j}^{mod})$ is issued. The
world's initial state, $s^{mod}(t=0)$, gathers reward $r(s^{mod}(t=0))$.

Finally, the world model is assumed to be time invariant: $\mathcal{S}^{mod}$, $Act$,
$\mathcal{O}$, $AP$, $T$, $O$, $h$, and $r$ do not vary with time. At this point, we are ready define a path in a POMDP.

\begin{defn}[Path in a POMDP] 
An infinite \emph{path} in a (labeled) POMDP, $\mathcal{PM}$, with states $s \in \mathcal{S}$ is an
infinite sequence $\pi = s_0o_0\alpha_1s_1o_1\alpha_2\dots \in (\mathcal{S} \times \mathcal{O} 
\times Act)^{\omega}$, such that $\forall t \ge 0$ we have~$ T(s_{t+1}|s_{t},\alpha_{t+1}) > 0,$ $ O(o_{t}|s_{t}) > 0,\text{ and}$ $\iota_{init}(s_0) > 0.$ 
Any finite prefix of $\pi$ that ends in either a state or an observation is a \emph{finite path
fragment}.
\end{defn}



Given a POMDP, we can define beliefs or distributions over states at each time step to keep track of sufficient statistics with finite description~\cite{astrom65}. The beliefs for all $s \in \mathcal{S}$ can be computed using the Bayes' law as follows:
\begin{align}
    b_0(s) &= \frac{\iota_{init}(s)O(o_0\mid s)}{\sum_{o \in O} \iota_{init}(s) O(o \mid s)},\\ \label{eq:beliefupdate}
    b_t(s) &= \frac{O(o_t \mid s,\alpha_t)\sum_{s' \in \mathcal{S}} T(s \mid s',\alpha_t)b_{t-1}(s')}{\sum_{s \in \mathcal{S}} O(o_t \mid s,\alpha_t)\sum_{s' \in \mathcal{S}} T(s \mid s',\alpha_t)b_{t-1}(s')},
\end{align}
for all $t\ge 1$. It is also worth mentioning that~\eqref{eq:beliefupdate} is referred to as the belief update equation.

\subsection{Stochastic Finite State Control of POMDPs} \label{sec:FSC}


It is well established that designing optimal policies for POMDPs based on the (continuous) belief states require uncountably infinite memory or
internal states \cite{CassandraKL94, MADANI20035}. This paper focuses on a particular class of POMDP controllers, namely, {\em stochastic finite state
controllers}. These controllers lead to a finite state space Markov chain for the closed loop
controlled system, allowing tractable analysis of the system's infinite executions in the context of
satisfying LTL formulae. For a finite set $A$, let $M_{A}$ denote the set of all probability distributions over $A$.

\begin{defn}[Stochastic Finite State Controller (sFSC)] \label{defn:sto-FSC} 
Let $\mathcal{PM}$ be a POMDP with observations $\mathcal{O}$, actions $Act$, and initial
distribution $\iota_{init}$. A \emph{stochastic finite state controller (sFSC)} for
$\mathcal{PM}$ is given by the tuple $\mathcal{G} = (G,\omega,\kappa)$ where
%
\begin{itemize}
\item $G = \{g_1,g_2,\dots,g_{|G|}\}$ is a finite set of internal states~(I-states).
\item $\omega:G \times \mathcal{O} \to M_{G \times Act}$ is a function of internal sFSC states  $g_k$ and observation $o$, such that $\omega(g_k,o)$ is a probability distribution over $G \times
Act$. The next internal state and action pair $(g_l,\alpha)$ is chosen by independent sampling of
$\omega(g_k,o)$. By abuse of notation, $\omega(g_l,\alpha|g_k,o)$ will denote the probability of
transitioning to internal sFSC state $g_l$ and taking action $\alpha$, when the current internal
state is $g_k$ and observation $o$ is received.
\item $\kappa:M_{\mathcal{S}} \to M_G$ chooses the starting internal FSC state $g_0$, by independent
sampling of $\kappa(\iota_{init})$, given initial distribution $\iota_{init}$ of $\mathcal{PM}$.
$\kappa(g|\iota_{init})$ will denote the probability of starting the FSC in internal state $g$ when
the initial POMDP distribution is $\iota_{init}$.
\end{itemize}
\end{defn}
A deterministic FSC can be written as a special case of the sFSC just defined.

Figure \ref{fig:POMDPandFSC} shows a schematic diagram of how an sFSC controls a POMDP.
Closing the loop with the sFSC, the POMDP evolves as follows.
\begin{enumerate} \label{enum:POMDPandFSC}
\item{} Set $t=0$. POMDP initial state $s_0$ is drawn independently from the
distribution $\iota_{init}$. The  stochastic function $\kappa(\iota_{init})$ is used
to determine or sample the initial sFSC I-state $g_0$.
\item{} At each time step $t \ge 0$, the POMDP emits an observation $o_t$ according to the
distribution $O(.|s_t)$.
\item{} The sFSC chooses its new state $g_{t+1}$ and action $\alpha_{t+1}$ using the
 distribution $\omega(.|g_{t},o_{t})$.
\item{} The action $\alpha_{t+1}$ is applied to the POMDP, which transitions to state $s_{t+1}$
according to distribution $T(.|s_t,\alpha)$.
\item{} $t=t+1$, Go to 2.
\end{enumerate}

\begin{figure} \centering 
\includegraphics[width=0.25\textwidth]{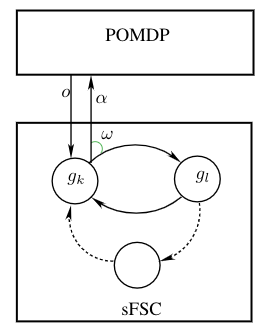}
\caption{POMDP controlled by an sFSC}
\label{fig:POMDPandFSC}
\vskip -0.15 true in
\end{figure}

\subsection{Markov Chain Induced by an sFSC} \label{sec:MCDueToFSC} 
Closing the loop around a POMDP with an sFSC, as in Figure \ref{fig:POMDPandFSC}, yields the
following transition system.

\begin{defn}[Global Markov Chain] \label{defn:globalMC} 
Let POMDP $\mathcal{PM}$ have state space $\mathcal{S}$ and let $G$ be the I-states of sFSC 
$\mathcal{G}$. The global Markov chain $\mathcal{M}^{\mathcal{PM},\mathcal{G}}_{\mathcal{S}\times
G}$ with execution $\sigma = \lbrace[s_0,g_0],[s_1,g_1],\dots\rbrace,\ [s_t,\ g_t] \in \mathcal{S}
\times G$ evolves as follows:
\begin{itemize}
\item The probability of initial global state $[s_0,g_0]$ is
  \begin{equation*} \label{eq:GlobalMCInitial}
     \iota_{init}^{\mathcal{PM},\mathcal{G}}\left(\left[s_0,g_0 \right]\right) 
            = \iota_{init}(s_0)\kappa(g_0|\iota_{init})
  \end{equation*}
\item The state transition probability, $T^{\mathcal{PM},\mathcal{G}}$, is given by
  \begin{equation*} \label{eq:GlobalMCTransition}
    \begin{aligned}
      T^{\mathcal{PM},\mathcal{G}} & \left(\left[s_{t+1},g_{t+1}\right] \left|
            \left[s_t,g_t\right] \right. \right)  = \\ 
       \sum_{o\in\mathcal{O}} &
            \sum_{\alpha \in Act}O(o|s_t)\omega(g_{t+1},\alpha |g_t,o)T(s_{t+1}|s_t,\alpha) 
    \end{aligned}
  \end{equation*}
\end{itemize}

\end{defn}
Note that the global Markov chain arising from a finite state space POMDP also has a finite state
space.


\section{LTL satisfaction over POMDP executions} \label{sec:Product} 

This section formalizes how the infinite traces produced by POMDP executions can be verified against
an LTL formula $\varphi$. This process is carried out by constructing a product of the labeled
POMDP, $\mathcal{PM}$, and the DRA modeling $\varphi$.


\begin{defn}[Product-POMDP] \label{defn:ProductPOMDP} 
Consider the labeled POMDP $\mathcal{PM}$ as described in Definition \ref{defn:POMDP} and an  LTL formula $\varphi$ with  a DRA
as defined in Definition \ref{defn:DRA} denoted $\mathcal{A}^{\varphi}$ with the Rabin acceptance condition given in Definition~\ref{defi:rabinaccept}. Then, the \emph{product-POMDP}, $\mathcal{PM} ^ \varphi$, has state space $\mathcal{S} =
{\mathcal{S}^{mod} \times Q}$, the same action set $Act$, and observations $\mathcal{O}$. Furthermore,

\begin{itemize}
\item The transition probabilities of $\mathcal{PM} ^ \varphi$ are given by
  \begin{multline*}
      T^{\varphi}\left(\langle s_j^{mod},q_l\rangle|\langle s_i^{mod},q_k\rangle,\alpha\right)
            \\ = \left\{ \begin{array}{ll} T(s_j^{mod}|s_i^{mod},\alpha) & \mbox{if } 
           \delta(q_k,h(s_i^{mod})) = q_l, \\ 0 & \mbox{otherwise}.
            \end{array} \right.
   \end{multline*}
\item The initial state probability distribution is given by
  \begin{equation*} 
       \iota_{init}^{\varphi}\left(\langle s^{mod},q  \rangle \right) 
         = \left\{ \begin{array}{ll} \iota_{init}(s^{mod}) & \mbox{if } \delta(q_0,h(s^{mod})) 
         = q, \\ 0 & \mbox{otherwise}.
      \end{array} \right.
   \end{equation*}
\item The observation probabilities are 
$ \mathcal{O}^{\varphi}(o|\langle s^{mod},q \rangle) = \mathcal{O}(o|s^{mod}).
$
\item If rewards $r(s^{mod})$ are defined over the POMDP $\mathcal{PM}$, 
new rewards over the product states are defined as
$ r^{\varphi}(\langle s^{mod},q\rangle)=r(s^{mod}).$
\end{itemize}

From the Rabin acceptance pairs $\Omega$ of $\mathcal{A}^\varphi$, define the accepting pairs
$\Omega^{\mathcal{PM}^\varphi} = \{(Repeat_i^{\mathcal{PM}^\varphi}, Avoid_i^{\mathcal{PM}^\varphi}), 
0 \le i \le |\Omega|\}$ for the product-POMDP as follows. A product state $s = \langle s^{mod},q\rangle$ 
of $\mathcal{PM}^{\varphi}$ is in $Repeat_i^{\mathcal{PM}^\varphi}$ iff $q \in Repeat_i$ and $s$ is
in $Avoid_i^{\mathcal{PM}^\varphi}$ iff $q \in Avoid_i$. Note that
$|\Omega^{\mathcal{PM}^{\varphi}}|=|\Omega|$.
\end{defn}

\subsection{Inducing an sFSC for $\mathcal{PM}$ from that of $\mathcal{PM}^{\varphi}$} 

To control the POMDP, $\mathcal{PM}$, it is necessary to derive a policy for $\mathcal{PM}$ from a
policy computed for $\mathcal{PM}^\varphi$.
\begin{defn}[Induced sFSC]  \label{defn:inducedFSC}
Let sFSC $\mathcal{G} = (G,\kappa,\omega)$ control product-POMDP $\mathcal{PM}^{\varphi}$. The sFSC
$\mathcal{G}^{mod} = (G^{mod},\kappa^{mod},\omega^{mod})$ that controls $\mathcal{PM}$ is
induced as follows.
\begin{itemize}
\item{} I-states of the induced sFSC is given by $ G^{mod} = G$.
\item{} The initial  state of the induced sFSC is given by
$
    \kappa^{mod}(g_k|\iota_{init}^{\varphi}) = \kappa(g_k|\iota_{init}^{\varphi}).
 $ 
\item{} The probability of transitioning between I-states and issuing an action $\alpha$ is given by 
$ \omega^{mod}(g_l,\alpha|g_k,o)=\omega(g_l,\alpha|g_k,o). $
\end{itemize}
\end{defn}

\subsection{Verifying LTL Satisfaction via the Product-POMDP} \label{sec:verifyLTL}

Now, we  consider the criterion for an (infinite) execution of $\mathcal{PM}$ to satisfy
$\varphi$. Let $\sigma^{\varphi}=s_0s_1\dots,\ s_t=\langle s_t^{mod},q_t\rangle$ be an execution of
the product-POMDP under some sFSC $\mathcal{G}$.

\begin{defn}[Accepting execution] We say that $\sigma^{\varphi}$ is an \emph{accepting execution} if, 
for some $(Repeat_i^{\mathcal{PM}^{\varphi}},Avoid_i^{\mathcal{PM}^{\varphi}})\in 
\Omega^{\mathcal{PM}^{\varphi}}$, $\sigma^{\varphi}$ intersects with 
$Repeat_i^{\mathcal{PM}^{\varphi}}$ infinitely often, while it intersects
$Avoid_i^{\mathcal{PM}^{\varphi}}$ only a finite number of times.
\end{defn}

The notion of verifying LTL properties using product transition systems in well known in the
literature \cite{BK08, Belta1} and the following lemma can be derived for the product-POMDP.

\begin{lemma}  \label{lemma:ltlsat}
Let $\sigma^{\varphi}=s_0s_1\dots,\mbox{ with } s_t = \langle s_t^{mod},q_t\rangle$ be an execution
of $\mathcal{PM}^{\varphi}$ and the corresponding execution of $\mathcal{PM}$ be given by $\sigma =
s_0^{mod}s_1^{mod}\dots$. Then, $\sigma$ satisfies $\varphi$, i.e., $\sigma \vDash \varphi$, if and
only if $\pi = q_0q_1\dots$ is an accepting run on $\mathcal{A}^{\varphi}$.
\end{lemma}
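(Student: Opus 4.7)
The plan is to reduce the statement to the definition of language acceptance of a DRA: since the DRA transition is deterministic and the product construction was designed exactly so that the $Q$-coordinate of a product-POMDP execution mimics the DRA run on the trace $h(\sigma)$, the equivalence will follow almost by unfolding definitions.

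First, I would check that the sequence $\pi = q_0 q_1 \dots$ appearing in the product execution is exactly the run of $\mathcal{A}^\varphi$ on the input word $h(\sigma) = h(s_0^{mod}) h(s_1^{mod}) \dots$. This is an induction on $t$. For the base case, the initial distribution $\iota_{init}^\varphi$ in Definition~\ref{defn:ProductPOMDP} has positive mass only on pairs $\langle s^{mod}, q\rangle$ with $q = \delta(q_0, h(s^{mod}))$, so $q_0$ in the product execution matches the first DRA state reached after reading $h(s_0^{mod})$ from the DRA's initial state. For the inductive step, the transition function $T^\varphi$ has positive probability only when $q_{t+1} = \delta(q_t, h(s_t^{mod}))$, so conditional on the execution $\sigma^\varphi$ having positive probability, the entire sequence $q_0 q_1 \dots$ coincides with the DRA run produced by feeding $h(\sigma)$ into $\mathcal{A}^\varphi$.

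Next, I would invoke the definition of LTL satisfaction via a DRA as reviewed in Section~\ref{sec:LTL}: $\sigma \vDash \varphi$ iff the DRA $\mathcal{A}^\varphi$ accepts the trace $h(\sigma)$, which by Definition~\ref{defi:rabinaccept} means that there exists a Rabin pair $(Avoid_r, Repeat_r) \in \Omega$ such that $Inf(\pi) \cap Avoid_r = \emptyset$ and $Inf(\pi) \cap Repeat_r \neq \emptyset$, where $\pi$ is precisely the run identified in the previous step. Combining both directions of this equivalence with the step above closes the ``iff.''

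The only delicate point is really bookkeeping about the initial state: Definition~\ref{defn:DRA} treats $q_0$ as the DRA's fixed initial state, while in the product-POMDP the first $q$-coordinate is already $\delta(q_0, h(s_0^{mod}))$, so one must be careful that ``accepting run'' is applied consistently (either include the DRA's true initial state as an artificial prefix $q_{-1}=q_0$, or equivalently note that adding or removing a single state does not affect the $Inf$ set, so both conventions give the same accepting condition). Apart from that, the argument is purely a matter of chasing definitions through the product construction; no probabilistic reasoning is needed because the DRA transition is deterministic and the $q$-coordinate is a deterministic function of the $s^{mod}$-coordinate history.
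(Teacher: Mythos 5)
Your proof is correct and takes essentially the same route as the paper's: the paper's own argument is a brief sketch stating that the trace $h(\sigma)$ drives the DRA along the unique run $\pi$ by construction of the product, so acceptance of $\pi$ is equivalent to $\sigma \vDash \varphi$. Your version simply fills in the induction and the (harmless) off-by-one in the initial $q$-coordinate, both of which the paper leaves implicit.
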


\begin{proof}The proof follows from the construction of the product-POMDP.
The run $\sigma^{\varphi}$ can be projected onto its POMDP and DRA components as runs $\sigma$ and
$\pi$. Next, the trace generated by $\sigma$, given by $h(\sigma) = h(s_0^{mod})h(s_1^{mod})\dots$
leads to the same unique path $\pi$ in the DRA $\mathcal{A}^{\varphi}$. Thus, if $\pi$ is an
accepting run in the DRA, then $\sigma \vDash \varphi$.
\end{proof}

\section{An Optimization Problem for LTL Satisfaction}\label{sec:optm}

In this section, we formulate  the problem of synthesizing sFSCs for POMDPs with LTL specification into an optimization problem. 

\subsection{Measuring the Probability of LTL Satisfaction}
\label{sec:overviewsolution}

This section culminates in Proposition \ref{prop:maxprob}, which presents the principal problem that
must be solved to find the sFSC  maximizing the probability of LTL specifications on
$\mathcal{PM}$.

Section \ref{sec:verifyLTL} described how the \emph{accepting executions} of the product-POMDP,
$\mathcal{PM^{\varphi}}$, under a given sFSC controller, have a one-to-one correspondence to the
executions of the original POMDP, $\mathcal{PM}$, that satisfy $\varphi$.

Recall from Section \ref{sec:MCDueToFSC} that a product-POMDP, $\mathcal{PM}^{\varphi}$, controlled
by an sFSC, $\mathcal{G}$, induces a Markov chain, denoted as $\mathcal{M}^{\mathcal{PM}^\varphi,
\mathcal{G}}_{\mathcal{S}\times G}$, evolving on the finite state space $\mathcal{S}\times G =
(\mathcal{S}^{mod}\times Q)\times G$.  Using the probability measure defined over the paths of the
global Markov chain (Section \ref{sec:markovchains}), the \emph{probability of satisfaction} of
$\varphi$ over the controlled system is defined as:

\begin{defn}[Probability of satisfaction of $\varphi$] 
For product-POMDP $\mathcal{PM}^{\varphi}$ controlled by sFSC $\mathcal{G}$, the {\em probability of
satisfaction of $\varphi$}, defined over $Paths(\mathcal{M}^{\mathcal{PM}^{\varphi},
\mathcal{G}}_{\mathcal{S}\times \mathcal{G}})$, is: 
  \begin{multline}  \label{eq:SatisfactionProb}
    \Pr(\mathcal{PM^{\varphi}}\vDash \varphi |\mathcal{G}) = 
        \Pr{}_{\mathcal{M}^{\mathcal{PM}^\varphi,{\mathcal{G}}}_{\mathcal{S}\times G}}
      \left[\sigma^g \in Paths(\mathcal{M}^{\mathcal{PM}^\varphi,\mathcal{G}}_{\mathcal{S}\times G}) \right.\\
       \left.        \mbox{ s.t.} \perp_{\mathcal{S}}(\sigma^g) \mbox{ is accepting.} \right].
  \end{multline} 
where $\perp_{\mathcal{S}}(.)$ projects paths $\sigma^g$ of the induced Markov chain
  \begin{equation*}
    \begin{array}{rcl} \sigma^{g} & = &
      \left[s_0,g_0\right]\left[s_1,g_1\right]\dots \\ & = & \left[\langle
       s_0^{mod}, q_0 \rangle, g_0\right]\left[\langle s_1^{mod}, q_1 \rangle,g_1\right]\dots
     \end{array}
  \end{equation*} 
to the associated product-POMDP execution
   \begin{equation*} \perp_{\mathcal{S}}(\sigma^g) = \langle s_0^{mod}, q_0 \rangle\langle 
         s_1^{mod}, q_1 \rangle\dots 
   \end{equation*} 
\end{defn}
Since the global Markov chain is unique given $\mathcal{PM}$, $\varphi$, and $\mathcal{G}$,
hereafter the subscript on the probability operator in the r.h.s. of  \eqref{eq:SatisfactionProb}
will be dropped, and expectation will be defined using the probability measure over the global
Markov chain.

Finally, define $\Pr(\mathcal{PM} \vDash \varphi |\mathcal{G}) \triangleq \Pr(\mathcal{PM}^{\varphi}
\vDash \varphi |\mathcal{G})$ as the probability that the original uncontrolled model satisfies
$\varphi$.

  Recall that the global Markov chain
$\mathcal{M}_{\mathcal{S}\times G}^{\mathcal{PM}^\varphi,\mathcal{G}}$ induced by the sFSC
$\mathcal{G}$ controlling the \emph{product-POMDP}, $\mathcal{PM}^{\varphi}$, evolves over the
global state space $\mathcal{S}\times G$, where the product-POMDP state space is given by
$\mathcal{S} = (\mathcal{S}^{mod}\times Q)$. Since the state space is finite, every state is either
positive recurrent or transient.

Consider a product state $s\in \mathcal{S}$. If there exists $g\in G$ such that the global state
$[s,g]$ is recurrent in $\mathcal{M}_{\mathcal{S}\times G}^{\mathcal{PM}^\varphi,\mathcal{G}}$, 
$s$ is said to be \emph{recurrent under $\mathcal{G}$}.  For a set $A = \{[s_i,g_i],\dots\} \in 
(\mathcal{S} \times G)$ the projection $\perp_{\mathcal{S}}$ is
  $ \perp_{\mathcal{S}}(A)=\{s_i,\dots\}\ \mbox{ (taken uniquely)}. $

Let $\mathcal{R}^{\mathcal{G}}$ denote the set of all recurrent states of
$\mathcal{M}_{\mathcal{S}\times G}^{\mathcal{PM}^\varphi,\mathcal{G}}$.  Partition the recurrent
states into disjoint recurrent classes $RecSets^{\mathcal{G}} = \{R_1,R_2,\dots R_N\}$ such that
  \begin{equation}\label{eq:recurrentstates}
    \begin{array}{rcl} R_1 \cup R_2 \cup \dots \cup R_N & = &\mathcal{R},
        \\ R_i \cap R_j & = & \emptyset,\ i \ne j .
    \end{array}
  \end{equation} 
The partitioning is required to be \emph{maximal}. Formally, this means that for each
$R_k,R_l\in RecSets^{\mathcal{G}}$, $
     s_i  \leftrightarrow  s_j,\ \forall s_i,s_j \in R_k,$ and $
      s_i  \nleftrightarrow  s_j,\ s_i \in R_k,\ s_j\in R_l, k\ne l.$ 
The first equation states that within each recurrent class, $R_k$, all states are reachable from one
another. The second equation states that no two distinct recurrent classes can be combined to make a
larger recurrent class, thus making the partitions maximal. 

\begin{defn}[$\varphi$-feasible Recurrent Set] \label{defn:feasibleRecSet} 
For sFSC $\mathcal{G}$, a (maximal) recurrent set or class $R_k$ is a \emph{$\varphi$-feasible 
recurrent set} if $\exists (Repeat_i^{\mathcal{PM}^\varphi},Avoid_i^{\mathcal{PM}^{\varphi}})$ such that,
  \begin{equation}\label{eq:conditionfeasablerecurrent}
     \begin{array}{rcl} \perp_\mathcal{S}(R_k)\cap Repeat_i^{\mathcal{PM}^\varphi} & \ne & \emptyset,
           \mbox{ \emph{and}} \\ \perp_\mathcal{S}(R_k)\cap Avoid_i^{\mathcal{PM}^\varphi} & = & \emptyset.
      \end{array}
  \end{equation} 
Let $\varphi\mbox{-}RecSets^{\mathcal{G}} \triangleq \bigcup R_k$, such that $R_k$ is $\varphi$-feasible.
\end{defn}

The problem of maximizing the probability of satisfaction can be solved as follows.

\begin{proposition} \label{prop:maxprob}
The satisfaction probability of an LTL formula can be maximized by optimizing the following objective
  \begin{equation}    \label{eq:maxprob}
      \max_{\mathcal{G}} \sum_{R \in \varphi\mbox{-}RecSets^{\mathcal{G}}}\Pr[\pi \to R],
  \end{equation}
  where $\pi \to R$ implies the path entering the recurrent set.
\end{proposition}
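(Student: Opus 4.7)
The plan is to unfold the satisfaction probability in \eqref{eq:SatisfactionProb} in terms of the limiting behavior of the global Markov chain $\mathcal{M}^{\mathcal{PM}^\varphi,\mathcal{G}}_{\mathcal{S}\times G}$, and then partition that limiting behavior by recurrent class. The driving observation is that, since $\mathcal{M}^{\mathcal{PM}^\varphi,\mathcal{G}}_{\mathcal{S}\times G}$ has a finite state space, every sample path almost surely enters exactly one recurrent class $R\in RecSets^{\mathcal{G}}$ in finite time and then stays in it, visiting every state of that class infinitely often (by positive recurrence on finite chains, Definition \ref{defn:recurrencetransience} and the comment thereafter).

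First I would translate the Rabin acceptance condition (Definition \ref{defi:rabinaccept}) from the language of paths to the language of recurrent sets. Let $\sigma^g=[s_0,g_0][s_1,g_1]\dots$ be a path in the global chain, with projected product-POMDP path $\perp_{\mathcal{S}}(\sigma^g)$. Conditioned on the event $\{\sigma^g \text{ enters } R_k\}$, which occurs with probability $\Pr[\pi\to R_k]$, we have almost surely $Inf(\perp_{\mathcal{S}}(\sigma^g)) = \perp_{\mathcal{S}}(R_k)$. Therefore, $\perp_\mathcal{S}(\sigma^g)$ is an accepting execution of $\mathcal{PM}^\varphi$ (equivalently, the associated DRA run is accepting, by Lemma \ref{lemma:ltlsat}) if and only if there exists $i$ with
\[
\perp_\mathcal{S}(R_k)\cap Repeat_i^{\mathcal{PM}^\varphi}\neq\emptyset,\quad \perp_\mathcal{S}(R_k)\cap Avoid_i^{\mathcal{PM}^\varphi}=\emptyset,
\]
which is precisely the $\varphi$-feasibility criterion of Definition \ref{defn:feasibleRecSet}. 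Hence acceptance of $\perp_\mathcal{S}(\sigma^g)$ depends only on which recurrent class $R_k$ is entered, and it holds almost surely when $R_k\in\varphi\mbox{-}RecSets^{\mathcal{G}}$ and almost never otherwise.

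Next I would assemble the probability. Because the partition in \eqref{eq:recurrentstates} is maximal and the classes are pairwise disjoint, the events $\{\sigma^g\to R_k\}_{k=1}^N$ are mutually exclusive, and their union occurs with probability one. Applying the law of total probability to \eqref{eq:SatisfactionProb} and dropping the null contribution of non-feasible classes gives
\[
\Pr(\mathcal{PM}^\varphi\vDash\varphi\mid\mathcal{G})=\sum_{R\in\varphi\mbox{-}RecSets^{\mathcal{G}}}\Pr[\pi\to R].
\]
Maximizing over $\mathcal{G}$ on both sides yields \eqref{eq:maxprob}.

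The main obstacle is the careful handling of the projection $\perp_\mathcal{S}$: the Rabin condition is stated on the DRA component of the product-POMDP, so one must verify that the $\varphi$-feasibility condition on the \emph{projected} recurrent set $\perp_\mathcal{S}(R_k)\subseteq\mathcal{S}^{mod}\times Q$ is equivalent to the DRA acceptance of the run $\pi=q_0q_1\dots$ extracted from $\sigma^g$. This uses the product construction (Definition \ref{defn:ProductPOMDP}), where a product state lies in $Repeat_i^{\mathcal{PM}^\varphi}$ (resp.\ $Avoid_i^{\mathcal{PM}^\varphi}$) iff its DRA component lies in $Repeat_i$ (resp.\ $Avoid_i$), so that $Inf(\pi)\cap Repeat_i\neq\emptyset$ iff $Inf(\perp_\mathcal{S}(\sigma^g))\cap Repeat_i^{\mathcal{PM}^\varphi}\neq\emptyset$, and analogously for $Avoid_i$. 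Once this bookkeeping is in place, the rest of the argument is a straightforward application of finite Markov chain theory.
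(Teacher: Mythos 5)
Your proof is correct and follows essentially the same route as the paper's: decompose the long-run behavior by which of the disjoint, exhaustive recurrent classes absorbs the path, and observe that acceptance of the projected execution is decided entirely by the $\varphi$-feasibility of that class. If anything, your write-up is more complete than the paper's, which only argues that the specification is satisfied when the sum equals one and then asserts the maximization claim, whereas you establish the exact identity $\Pr(\mathcal{PM}^\varphi\vDash\varphi\mid\mathcal{G})=\sum_{R\in\varphi\mbox{-}RecSets^{\mathcal{G}}}\Pr[\pi\to R]$ via the almost-sure equality $Inf(\perp_{\mathcal{S}}(\sigma^g))=\perp_{\mathcal{S}}(R_k)$ on the absorption event together with the law of total probability.
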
 
\begin{proof}
Recall that recurrence implies
absorption, i.e., if the Markov chain path enters a state in a recurrent set, the path is forever
confined to that set. This implies the following long term behavior of path probabilities:
   \begin{equation} \Pr[\pi \to (R_k \cup R_l)] = \Pr[\pi \to R_k] + \Pr[\pi \to R_l],\ k \ne l, \end{equation} wherein we used~\eqref{eq:recurrentstates}. 
Over infinite executions, the path must end up in some recurrent set,
  \begin{equation*} \sum_{R_k \in RecSets^{\mathcal{G}}}\Pr[\pi \to R_k] = 1. \end{equation*}
Furthermore,  conditions~\eqref{eq:conditionfeasablerecurrent}  imply the existence of a recurrent state in $Repeat_i^{\mathcal{PM}^\varphi}$,  while simultaneously avoiding those states from $Avoid_i^{\mathcal{PM}^\varphi}$ that are recurrent under sFSC $\mathcal{G}$. Therefore, if $$
\sum_{R \in \varphi\mbox{-}RecSets^{\mathcal{G}}}\Pr[\pi \to R] =1,
$$
then the LTL specifications are satisfied. Hence, maximizing the $\sum_{R \in \varphi\mbox{-}RecSets^{\mathcal{G}}}\Pr[\pi \to R]$ term  implies maximizing the probability of satisfying the LTL specifications.
\end{proof}

To further understand the solution to \eqref{eq:maxprob}, note that there are two main
components in the choice of an sFSC, $\mathcal{G}$:

\begin{enumerate}
\item {\bf Structure:} The sFSC has two structural components:
\begin{enumerate}
\item{} The number of I-states, $|G|$, which impacts the size of the global Markov chain state space.
\item{} The set of parameters in $\omega$ and $\kappa$ with non-zero values. This set determines the
global state space connectivity graph, whose nodes represent states of the global Markov chain, and
whose directed edges indicates that a one-step transition can be made from $[s,g]$ to $[s',g']$. The
underlying graph completely and unambiguously determines the global Markov chain recurrent and
transient classes. 
\end{enumerate} 
Thus, the structure affects both the partitioning $RecSets^\mathcal{G}$ and also the
$\varphi$-feasibility of these sets.
\item {\bf Quality:} The values of non-zero parameters of $\omega$ and $\kappa$ determine
the probability with which the global Markov chain paths reach  some $R \in RecSets$.
\end{enumerate}

\subsection{Reward Design for LTL Satisfaction} \label{sec:reward}

This section introduces an \emph{any time} algorithm to optimize over both the sFSC
quality and structure. This algorithm is based on the fact that finite state Markov chains evolve in
two distinct phases: a transient phase, and a steady state phase in which the execution has been
absorbed into a recurrent set.  Therefore, rewards are designed with the following goals:
\begin{itemize}
\item During the transient phase, the global state is absorbed into a $\varphi$-feasible recurrent set
\emph{quickly}.
\item During the steady state phase, the sytem visits the states in 
$Repeat_{\mathfrak{r}}^{\mathcal{PM}^{\varphi}}$ \emph{frequently}.
\end{itemize} 
%

\subsubsection{Incentivizing Frequent Visits to $Repeat_{\mathfrak{r}}^{\mathcal{PM}^\varphi}$} 

In classical POMDP planning, an agent collects rewards as it visits different states. To
quickly accumulate useful goals rewards collected at later times are discounted.  While
there exist temporal logics that allow explicit verification/design for known finite time horizon
\cite{BK08}, it may be hard to predict the horizon for a given POMDP and LTL formula
\emph{a-priori}. In such scenarios, a discounted reward scheme, which does not affect feasibility,
thus offers a viable solution.

Consider, a particular product-POMDP with Rabin acceptance pair
$(Repeat_{\mathfrak{r}}^{\mathcal{PM}^\varphi}, Avoid_{\mathfrak{r}}^{\mathcal{PM}^\varphi})$.  The
aim is to visit states in $Repeat_{\mathfrak{r}}^{\mathcal{PM}^\varphi}$ often.  To achieve this, we
assign the following ``repeat'' reward scheme (see Figure~\ref{fig:discountedrewards}):
  \begin{equation}  \label{eq:RewardDiscount}
     r_{\mathfrak{r}}^{\beta}(s) = \left\{
          \begin{array}{ll} 1 & \mbox{ if } s \in
                  Repeat_{\mathfrak{r}}^{\mathcal{PM}^\varphi}, \\ 0 & \mbox{ otherwise.}
          \end{array} \right. 
  \end{equation} 

The discounted reward problem takes the form:
  \begin{equation} \label{eq:discprob}
     \eta_{\beta}(\mathfrak{r})=\lim_{T \to \infty} \mathbbm{E} \left[\sum_{t=0}^{T}
    \beta^tr^{\beta}_{\mathfrak{r}}(s_t)\left|\iota_{init}^{\varphi}\right.\right],\ \ \ 0< \beta <1,
  \end{equation} 
where $\beta$ is the discount factor. Note that in \eqref{eq:discprob}, while the objective
incentives early visits to states in $Repeat_{\mathfrak{r}}^{\mathcal{PM}^\varphi}$ in order to accrue
maximum rewards, it has two drawbacks:

\begin{figure}
\begin{center}
\resizebox{8cm}{!} {
\includegraphics[width=0.73\textwidth]{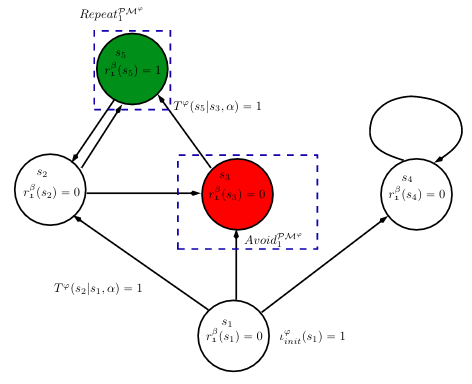}
}
\end{center}
\vskip -0.1 true in
\caption[Assigning rewards for visiting $Repeat^{\mathcal{PM}^\varphi}$ frequently.]{Assigning
rewards for frequent visits to $Repeat^{\mathcal{PM}^\varphi}$. The diagram depicts a product-POMDP
state space. The edges depend on an action, $\alpha\in Act$, chosen arbitrarily here. There
is only one Rabin pair $(Repeat_1^{\mathcal{PM}^\varphi},Avoid_1^{\mathcal{PM}^\varphi})$. In order
to incentivize visiting $Repeat_1^{\mathcal{PM}^\varphi}$, the state $s_5\in\mathcal{S}$ is assigned
a reward of 1, while all other states are assigned a reward of 0.}
\label{fig:discountedrewards}
\end{figure}

\begin{enumerate}
\item The objective becomes exponentially less dependent, with decay rate $\beta$, on
visits to $Repeat_{\mathfrak{r}}^{\mathcal{PM}^\varphi}$ at later time steps. Thus, frequent
visits are incentivized mainly during the initial time steps.
 
\item Due to partial observability, the transition from a transient to a recurrent phase cannot
be reliably detected.  Hence, visits to $Avoid_{\mathfrak{r}}^{\mathcal{PM}^\varphi}$ cannot be precluded
in steady state.

\end{enumerate}

To tackle the first problem, if a stationary policy that is independent of the initial product-POMDP
distribution can be found, then the expected visiting frequency will remain the same for later
time steps, including during steady state, when the global Markov chain evolves in a recurrent set.
A sub-optimal solution for the second problem is discussed next.

\subsubsection{The Steady State Probability of Visiting $Avoid^{\mathcal{PM}^\varphi}$} 

This section develops a method to compute the probability of visiting a state in
$Avoid^{\mathcal{PM}^\varphi}$. If this quantity can be computed, then a discounted reward criterion
can be optimized under the constraint that this probability is zero, or extremely low. In order to
compute the probability of visiting $Avoid^{\mathcal{PM}^\varphi}$ regardless of the global Markov
chain execution phase (transient or steady state), we first define a transition rule that makes every state in $Avoid_{\mathfrak{r}}^{\mathcal{PM}^\varphi}$ a \emph{sink}. To this end, consider the following {\em modified} product-POMDP. For $\forall \alpha\in Act$, let
  \begin{equation}   \label{eq:modifiedT}
      T^\varphi_{mod}(s_k|s_j,\alpha)=\left\{
      \begin{array}{ll} 0\ \ \ \ \ \ \ \ \mbox{ if }s_j\ne s_k \mbox{ and } s_j\in
      Avoid_{\mathfrak{r}}^{\mathcal{PM}^\varphi} \\ 1\ \ \ \ \ \ \ \ \mbox{ if }s_j =
      s_k \mbox{ and } s_j\in Avoid_{\mathfrak{r}}^{\mathcal{PM}^\varphi}\\ 
      T^\varphi(s_k|s_j,\alpha)\ \ \ \ \ \ \ \ \mbox{ otherwise.}
      \end{array} \right.
  \end{equation} 
Then, assign a different, ``avoid'' reward scheme
  \begin{equation} \label{eq:RecurrenceReward}
      r_{\mathfrak{r}}^{av}(s) = \left\{ \begin{array}{ll} 1 & \mbox{ if } s \in
      Avoid_{\mathfrak{r}}^{\mathcal{PM}^\varphi}, \\ 0 & \mbox{ otherwise.}
  \end{array} \right.
  \end{equation} 


For sFSC $\mathcal{G}$, consider the expected long term average reward
  \begin{equation} \label{eq:etaAverage}
     \eta_{av}(\mathfrak{r})=\lim_{T\to\infty}\frac{1}{T}\mathbbm{E}_{mod}
      \left[\sum_{t=0}^{T}r_{\mathfrak{k}}^{av}(s_t)\left|\iota_{init}^{\mathcal{PM}^\varphi}\right.\right],
  \end{equation} 
where the expectation is take over the global Markov chain arising from the transition
distribution $T^\varphi_{mod}$ of \eqref{eq:modifiedT}.

\begin{lemma} \label{lem:AbsorptionEqualsEta}
Let $\pi\in Paths(\mathcal{M}^{\mathcal{PM}^\varphi,\mathcal{G}})$ be a global Markov
chain path  arising from the execution of the original \emph{unmodified} product-POMDP.  Then
  \begin{equation} \label{eq:lemma2reward}
   \Pr\left[\pi \to (Avoid_{\mathfrak{r}}^{\mathcal{PM}^\varphi} \times G)
         \left|\iota_{init}^{\varphi,\mathcal{G}}\right.\right]=\eta_{av}(\mathfrak{r}).
  \end{equation} 
\end{lemma}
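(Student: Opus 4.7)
The plan is to exploit the fact that the modification in \eqref{eq:modifiedT} turns every state of $Avoid_{\mathfrak{r}}^{\mathcal{PM}^\varphi}\times G$ into an absorbing singleton in the global Markov chain, without altering the dynamics anywhere else. Write $A\triangleq Avoid_{\mathfrak{r}}^{\mathcal{PM}^\varphi}\times G$ and let $\tau_A$ denote the first hitting time of $A$ in the (modified or unmodified) global Markov chain induced by $\mathcal{G}$. Because $T^{\varphi}_{mod}$ agrees with $T^{\varphi}$ on every state outside $A$, the two chains are coupled along every prefix up to $\tau_A$, so the distribution of $\tau_A$ (in particular the event $\{\tau_A<\infty\}$) is the same under both. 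This immediately gives
\begin{equation*}
\Pr\bigl[\pi\to A \mid \iota_{init}^{\varphi,\mathcal{G}}\bigr]
= \Pr{}_{mod}\bigl[\tau_A<\infty\mid \iota_{init}^{\varphi,\mathcal{G}}\bigr].
\end{equation*}

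The next step is to rewrite the right-hand side of \eqref{eq:lemma2reward} pathwise. Under $T^{\varphi}_{mod}$, once a path enters $A$ it remains there forever, so for every sample path $\pi = s_0 s_1\dots$ of the modified global Markov chain we have $r_{\mathfrak{r}}^{av}(s_t)=\mathbbm{1}(s_t\in A)$ and
\begin{equation*}
\lim_{T\to\infty}\frac{1}{T}\sum_{t=0}^{T}\mathbbm{1}(s_t\in A)
= \mathbbm{1}(\tau_A<\infty).
\end{equation*}
That is, the Cesaro average of the avoid reward along a modified path is a $\{0,1\}$-valued random variable equal to $1$ exactly when the path is ever absorbed into $A$.

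Finally, I exchange limit and expectation. Since the summands are uniformly bounded by $1$, the bounded convergence theorem (applied to the partial Cesaro averages $\frac{1}{T}\sum_{t=0}^{T}\mathbbm{1}(s_t\in A)$) yields
\begin{equation*}
\eta_{av}(\mathfrak{r}) = \mathbbm{E}_{mod}\bigl[\mathbbm{1}(\tau_A<\infty)\mid \iota_{init}^{\varphi,\mathcal{G}}\bigr]
= \Pr{}_{mod}\bigl[\tau_A<\infty\mid\iota_{init}^{\varphi,\mathcal{G}}\bigr],
\end{equation*}
and combining with the coupling argument from the first step gives $\eta_{av}(\mathfrak{r}) = \Pr[\pi\to A\mid \iota_{init}^{\varphi,\mathcal{G}}]$, which is \eqref{eq:lemma2reward}. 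The only subtle step is justifying the interchange of $\lim_{T\to\infty}$ and $\mathbbm{E}_{mod}$ in the definition of $\eta_{av}(\mathfrak{r})$; this is where I would be most careful, but boundedness of the avoid reward, together with the finiteness of the global state space (which makes $\{s_t\in A\}$ eventually constant along almost every path), makes bounded convergence immediate.
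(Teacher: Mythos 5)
Your proposal is correct and follows essentially the same route as the paper's proof: both arguments rest on the observation that $T^{\varphi}_{mod}$ and $T^{\varphi}$ assign identical probabilities to every path prefix that has not yet entered $Avoid_{\mathfrak{r}}^{\mathcal{PM}^\varphi}\times G$ (the paper states this as equality of cylinder-set measures, you as a coupling up to $\tau_A$), followed by the observation that the pathwise Cesaro average of the avoid reward in the modified chain is exactly $\mathbbm{1}(\tau_A<\infty)$. Your explicit appeal to bounded convergence to justify interchanging $\lim_{T\to\infty}$ with $\mathbbm{E}_{mod}$ is a welcome bit of extra care that the paper's proof passes over silently.
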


\begin{proof} See Appendix \ref{sec:lemmaproof}.
\end{proof}

\noindent 
Lemma \ref{lem:AbsorptionEqualsEta} provides a tractable way to compute the probability of visiting
$Avoid_{\mathfrak{r}}$. Note the conditional dependence on $\iota_{init}^\varphi$ in~\eqref{eq:etaAverage}. Recall that to satisfy an LTL formula, it is only required to guarantee
that the probability of visiting an avoid state is zero in \emph{steady state}. Requiring this
probability to be zero during the transient period may render a solution infeasible.

Unfortunately, using the formulation presented so far it is not possible to know if a
\emph{particular} POMDP path has entered steady state behavior.
%
%
At most, it is possible to know the \emph{probability} of being in steady state by taking the sum of
all beliefs over recurrent states that form the steady state behavior.

Next, assume that the controller has access to an oracle that can declare the end of the transient
period during which visits to $Avoid_{\mathfrak{r}}^{\mathcal{PM}^\varphi}$ may be allowed. The
oracle can also indicate when the system enters a sub-Markov chain where $Avoid_{\mathfrak{r}}$ is
never visited. Of course, no such oracle exits, but we show below that a product-POMDP and reward
assignment can be \emph{designed} such that the controller \emph{implicitly incorporates} an oracle
function.  

\subsubsection{Partitioned sFSC and Steady State Detecting Global Markov Chain} 

Suppose that the sFSC I-states, $G$, are divided \emph{a-priori} into two sets--transient
states $G^{tr}$ and steady states $G^{ss}$-- such that
$ G = G^{tr} \cup G^{ss},$ and $ G^{tr} \cap G^{ss} = \emptyset$ .  
 
As explained below, this state partitioning can indicate if the execution of the global Markov chain
has zero probability of future visits to $Avoid_{\mathfrak{r}}^{\varphi}$.
  
Let the global state at time $t$ be given by $[s_t,g_t]$. We seek to create a global Markov chain
whose underlying product-POMDP has the following property
  \begin{multline} 
    \Pr\left[[s_{t'},g_{t'}] \in Avoid_{\mathfrak{k}}^{\mathcal{PM}^\varphi} 
    \times G \left| \exists t\le t',\mbox{ s.t. } \right. \right. \\\left.\left. [s_{t},g_{t}] \in
    Repeat_{\mathfrak{k}}^{\mathcal{PM}^\varphi} \times G^{ss}\right.\right] = 0.
    \label{eq:ssconstraint}
  \end{multline} 
In other words, let the product-POMDP visit states in $Repeat_{\mathfrak{k}}^{\mathcal{PM}^\varphi}$
while the sFSC executes in steady state, i.e., $g_t\in G^{ss}$. Then, it must be ensured that the
probability for the product-POMDP to visit $Avoid_{\mathfrak{k}}^{\mathcal{PM}^\varphi}$ at anytime
in the future is zero. This requirement can be achieved in three steps.

First, constrain the sFSC to prevent a transition from an I-state in $G^{ss}$ to any other I-state in
$G^{tr}$.  Formally, $\forall \alpha\in Act, o\in \mathcal{O}$,
  \begin{equation}  \label{eq:partomega}
     \omega(g',\alpha|g,o)=0,\ \ g\in G^{ss},g'\in G^{tr}.
  \end{equation} 
This constraint ensures that the controller transitions to steady state only \emph{once} during an
execution, mimicking the fact that for each infinite path in the Markov chain, the transition to a
recurrent set occurs once.

Second, the method of evaluating the global Markov chain transition distribution 
is based on the following definition.
\begin{defn}[Steady State Detecting Global Markov Chain]  \label{defn:ssdMC}
The \emph{steady state detecting global (ssd-global) Markov chain} is defined by its transition 
distribution function
  \begin{multline}
    T^{\mathcal{PM}^\varphi,\mathcal{G}}_{ssd}\left([s',g']\left|[s,g]\right.\right) = \\
     \left\{
       \begin{array}{ll} \underset{\alpha,o}{\sum}O(o|s)
             \omega(g',\alpha|g,o)T^\varphi(s'|s,\alpha) & \mbox{\emph{if} } g \in G^{tr},g' \in G^{ss}, \\ 
          \underset{\alpha,o}{\sum}O(o|s)\omega(g',\alpha|g,o)
       T^\varphi_{mod}(s'|s,\alpha) & \mbox{ \emph{if} } g,g'\in G^{ss} \\ 0 & \mbox{ \emph{if} } g 
               \in G^{ss},g' \in G^{tr}, \\ & \mbox{ due to~\eqref{eq:partomega}}.
     \end{array} \right.\\
    \label{eq:ssdMC}
    \end{multline}
\end{defn} 
Note the use of \emph{modified} transition function from~\eqref{eq:modifiedT}
in Definition~\ref{defn:ssdMC}. This modification transforms all states in
$Avoid_{\mathfrak{r}}^{\mathcal{PM}^\varphi}$ to sinks. This construction prevents visits to
$Avoid_{\mathfrak{r}}^{\mathcal{PM}^\varphi}$ during steady state, while allowing the execution to
visit these states in the transient phase.

Third, in addition to the reward schemes of  \eqref{eq:RewardDiscount} and
\eqref{eq:RecurrenceReward}, assign the following rewards to the I-states:
  \begin{equation} \label{eq:fscreward}
    r_{\mathfrak{r}}^{G}(g)
        =\left\{\begin{array}{ll} 1 & \mbox{ if }g\in G^{ss}, \\ 0 & \mbox{ if }g\in G^{tr}.
                \end{array} \right.
  \end{equation}

\subsection{Casting into an Optimization Problem} 

Let us define $\iota_{init}^{ss}$ as a distribution over the ssd-global Markov chain states as follows.
  \begin{multline} 
  \iota_{init}^{ss}([s,g])\\= \left\{
     \begin{array}{cc}
     \frac{1}{|G^{ss}||Repeat_{\mathfrak{r}}^{\mathcal{PM}^\varphi}|} &
     \mbox{ if } s \in Repeat_{\mathfrak{r}}^{\mathcal{PM}^\varphi},g\in G^{ss},\\
      0 & \mbox{ otherwise}.
      \end{array} \right.
      \label{eq:iss}
  \end{multline}
  Using the rewards \eqref{eq:RewardDiscount},
\eqref{eq:RecurrenceReward} and \eqref{eq:fscreward}, for an sFSC of fixed size and partitioning
$G=\{G^{tr},G^{s}\}$, the following conservative optimization criterion is derived:

\noindent {\bf Conservative Optimization Criterion}
  \begin{equation}\label{eq:coc}
   \begin{array}{ccl} 
    \underset{\omega,\kappa}{\max} & \eta_{\beta}({\mathfrak{r}}) & \\ 
       \mbox{subject to} & \eta_{av}^{ssd}({\mathfrak{r}}) = 0 & \\ 
     & \omega(g',\alpha|g,o) = 0 & g'\in G^{tr},g\in G^{ss} \\ 
     & \underset{(g',\alpha)\in G\times Act}{\sum}\omega(g',\alpha|g,o) = 1 & \forall g \in G, o\in\mathcal{O} \\
     & \hspace{-1cm}\omega(g',\alpha|g,o) = 1 &\hspace{-1cm} \forall g,g' \in G, o\in\mathcal{O}, \alpha \in Act \\ 
     & \underset{g\in G}{\sum}\kappa(g) = 1.
   \end{array}
\end{equation} 

In \eqref{eq:coc}, frequent visits to $Repeat_{\mathfrak{r}}^{\mathcal{PM}^\varphi}$
are incentivized by maximizing
  \begin{equation*} \eta_{\beta}({\mathfrak{r}}) = \lim_{T \to
       \infty}\mathbbm{E}\left[\sum_{t=0}^{T}\beta^tr^{\beta}_{\mathfrak{r}}(s_t)r^G_{\mathfrak{r}}(g_t)
         \left|\iota_{init}^{\varphi}\right.\right],\ \ \ 0< \beta <1,
   \end{equation*} 
while steady state visits to $Avoid_{\mathfrak{r}}^{\mathcal{PM}^\varphi}$ are forbidden via the first
constraint in \eqref{eq:coc}
  \begin{equation*} \label{eq:etazero}
   \eta_{av}^{ssd}({\mathfrak{r}}) = \lim_{T\to\infty}\frac{1}{T}\mathbbm{E}_{ssd}
        \left[\sum_{t=0}^{T}r(s_t)r^G_{\mathfrak{r}}(g_t) \left|\iota_{init}^{ss}\right.\right] = 0,
  \end{equation*} 
The  constraints relate to
the I-state partitioning introduced above and the probabilities of admissible sFSC parameters.  Note
that $\eta_{av}^{ssd}$ is computed using the ssd-global Markov chain transition distribution from
 \eqref{eq:ssdMC}, but the expression $\eta_{\beta}$ uses the unmodified Markov chain
transition distribution. The product terms $r_{\mathfrak{r}}^{\beta}r_{\mathfrak{r}}^G$ and
$rr_{\mathfrak{r}}^G$ ensure that only those visits to
$Repeat_{\mathfrak{r}}^{\mathcal{PM}^\varphi}$ are rewarded when the controller I-state lies in
$G^{ss}$, implying that it can now guarantee no more visits to
$Avoid_{\mathfrak{r}}^{\mathcal{PM}^\varphi}$ 

Note that the choice of initial condition~\eqref{eq:iss} implies that in steady state,
  \begin{equation} \label{eq:globalavoid}
    \forall [s,g] \in (Repeat_{\mathfrak{r}}^{\mathcal{PM}^\varphi}\times G^{ss}),\ \ \
     [s,g]\nrightarrow(Avoid_{\mathfrak{r}}^{\mathcal{PM}^\varphi} \times G).
  \end{equation}
Compare \eqref{eq:globalavoid} to  \eqref{eq:ssconstraint}, which can be re-written as
  \begin{multline*} 
     \Pr\left[\pi \to [s,g] \in (Repeat_{\mathfrak{r}}^{\mathcal{PM}^\varphi}\times G)\right] > 0
        \\ \implies  [s,g]\nrightarrow(Avoid_{\mathfrak{r}}^{\mathcal{PM}^\varphi} \times G).
  \label{eq:pr2}
  \end{multline*} 

The condition of the latter statement is only required to hold for those states in
$(Repeat_{\mathfrak{r}}^{\mathcal{PM}^\varphi} \times G)$ under the current $\omega$ and
$\kappa$. If some repeat state is not visited by the controller during steady state, then the proposed
choice of $\iota^{ss}_{init}$ adds additional feasibility constraints, which may severely reduce the
obtainable reward, $\eta_{\mathfrak{r}}^\beta$, and possibly render the problem infeasible. This is why 
\eqref{eq:coc} is called a Conservative Optimization Criterion. While sub-optimal, this criterion has
some significant advantages.  In the sequel, we show that the Conservative Optimization Criterion
can be framed as a policy iteration algorithm, with efficient policy improvement steps.  Moreover,
the improvement steps can also add I-states to the sFSC, which help to escape the local maxima
encountered during optimization of the total reward $\eta_\beta(\mathfrak{r})$. The added sFSC
I-states allow the generation and differentiation of many new observation and action sequences. This
implies that many new paths in the global Markov chain can be explored for the purpose of improving
the optimization objective. We begin by leveraging the Poisson Equation method to convert the Conservative Optimization Criterion into a bilinear program. 

\section{The  Poisson Equation for the Global \\Markov Chain}

The discussion in this section is restricted to time homogenous, discrete time, finite state space
Markov chains \cite{Makowski94onthe}. The main focus is the ssd-global Markov chain
of Definition~\ref{defn:ssdMC}, which can differentiate whether states in
$Avoid_{\mathfrak{r}}^{\varphi,\mathcal{G}}$ can be visited. Recall that the ssd-global Markov chain
is generated by partitioning the sFSC I-states into transient and steady state sets, $G^{tr}$, and
$G^{ss}$. The transition probabilities $T^{\mathcal{PM}^\varphi,\mathcal{G}}_{ssd}$ were then
computed using~\eqref{eq:ssdMC}. In addition, recall the average reward function
$
       r^{av}\left([s,g]\right) = r^{av}_{\mathfrak{r}}(s)r^G_{\mathfrak{r}}(g).
$ 
A vectorized representation is needed for ordering the global state space $\mathcal{S}\times G$  denoted as $\vec{r}^{av}$.
\begin{defn}[Poisson Equation \cite{Hernandez-Lerma03}]     \label{defn:poissonequation}
The Poisson Equation  (PE) for $T^{\mathcal{PM}^\varphi,\mathcal{G}}_{ssd}$ is
  \begin{equation}
    \begin{array}{ccc}
      \mbox{\emph{(a)}}\ \ \ \vec{\mathfrak{g}} = T^{\mathcal{PM}^\varphi,\mathcal{G}}_{ssd}
                  \vec{\mathfrak{g}} & \mbox{ \emph{and  (b) }}
      \ \ \ \vec{\mathfrak{g}} + \vec{\mathfrak{h}} - T^{\mathcal{PM}^\varphi,\mathcal{G}}_{ssd}
          \vec{\mathfrak{h}} =\vec{r}^{av}.
    \end{array}
    \label{eq:pe}
  \end{equation}
where the matrix form \eqref{eq:ssdMC} of $T^{\mathcal{PM}^\varphi,\mathcal{G}}_{ssd}$ has been used. If  \eqref{eq:pe} holds, the pair
$(\vec{\mathfrak{g}},\vec{\mathfrak{h}})$ is called a solution to the PE with \emph{charge}~$\vec{r}^{av}$.
\end{defn}

More generally, the reward $r^{av}$ can be replaced with any measurable function, $f:\mathcal{S}\times G\to
\mathbbm{R}$. The PE  is developed in \cite{Makowski94onthe, Hernandez-Lerma03}, and the conditions
for existence and uniqueness of its solutions can be found in \cite{MeynTweedie09}.

When a Markov chain has a single recurrent class and possibly some transient states, the
PE  solves the long term average cost criterion for a given initial state
$\mathfrak{s}_0$,
  \begin{equation*}  \label{eq:anotherav}
     \eta_{av} = \lim_{T\to\infty} \mathbb{E}\left[\frac{1}{T} \left. \sum_{t=0}^{T}r^{av}(t) \right| 
      \mathfrak{s}_0\right]
 \end{equation*}
for the reward $r^{av}(t)$. In fact, the value for the scalar $\eta_{av}$ is the solution to
the following slightly different version of the PE~\eqref{eq:pe}:
  \begin{equation}   \label{eq:scalarpe}
   \eta_{av} + \vec{\mathfrak{h}} - T^{\mathcal{PM}^\varphi,\mathcal{G}}_{ssd}\vec{\mathfrak{h}} 
      =\vec{r}^{av}.
  \end{equation}
Note that  \eqref{eq:scalarpe} is obtained from  \eqref{eq:pe}(b) by replacing the
vector $\vec{\mathfrak{g}}$ by the scalar $\eta_{av}$. For a finite Markov chain with a single
recurrent class, this  has a unique solution for $\eta_{av}$. 

The multi-chain PE  as introduced in 
\eqref{eq:pe} is used when the average reward accounts for the probability of absorption into the
 different $R_i$ in the computation of the average cost given the initial distribution
 $\iota_{init}(\mathfrak{s})$. Further discussion of the
PE  in the context of dynamic programming is provided in Section~\ref{sec:av_vf}.

For the finite state closed loop global Markov chain under study in this work,  a solution for the PE always exists. 

%
\begin{lemma} [\cite{Hernandez-Lerma03}]
(a) For a finite state space Markov chain with transition matrix
$T^{\mathcal{PM}^\varphi,\mathcal{G}}_{ssd}$ and charge $r^{av}$, a solution pair
$(\vec{\mathfrak{g}},\vec{\mathfrak{h}})$ to the PE  always exists. (b) Moreover, $\vec{\mathfrak{g}}$ is \emph{unique} and is given by
\begin{equation}
\vec{\mathfrak{g}}=\Pi_{ssd}\vec{r}^{av},
\label{eq:g}
\end{equation}
where $\Pi_{ssd}$ is the limiting matrix introduced in Definition \ref{defn:LimitingMatrix}. (c) The solution $\vec{\mathfrak{g}}$ in  \eqref{eq:g}, when paired with $\vec{\mathfrak{h}} = H\vec{r}^{av}$  solves the PE, where $H$ is called the \emph{deviation matrix} given by 
  \begin{equation*}
     H = \underset{\mbox{fundamental matrix, }Z}{\underbrace{(I - 
          T^{\mathcal{PM}^\varphi,\mathcal{G}}_{ssd} + \Pi_{ssd})^{-1}}}(I-\Pi_{ssd}).
  \end{equation*}
(d) $\vec{\mathfrak{h}}$ is not unique. If $(\vec{\mathfrak{g}},\vec{\mathfrak{h}})$ is a solution
then any $(\mathfrak{g},\vec{\mathfrak{h}}+\Pi_{ssd}\vec{\mathfrak{h}})$ is also a
solution.
\end{lemma}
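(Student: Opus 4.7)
The plan is to reduce everything to three operator identities linking $T$, $\Pi$, and $Z$, after which each of the four parts becomes a short substitution. Abbreviate $T := T^{\mathcal{PM}^\varphi,\mathcal{G}}_{ssd}$ and $\Pi := \Pi_{ssd}$. From Definition~\ref{defn:LimitingMatrix} (the Cesaro definition of $\Pi$) and the defining relation $Z(I-T+\Pi) = (I-T+\Pi)Z = I$, I will first establish
\begin{equation*}
T\Pi = \Pi T = \Pi^2 = \Pi, \qquad Z\Pi = \Pi Z = \Pi, \qquad (I-T)Z = Z(I-T) = I - \Pi.
\end{equation*}
The first block is standard. For the second, the key observation is $(I-T+\Pi)\Pi = \Pi$, which gives $Z\Pi = Z(I-T+\Pi)\Pi = \Pi$; the companion $\Pi Z = \Pi$ is symmetric. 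Expanding $(I-T+\Pi)Z = I$ and substituting $\Pi Z = \Pi$ then yields $(I-T)Z = I-\Pi$.

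For part (b), I iterate \eqref{eq:pe}(a) to obtain $\vec{\mathfrak{g}} = T^k\vec{\mathfrak{g}}$ for every $k$, Cesaro-average, and pass to the limit to get $\vec{\mathfrak{g}} = \Pi\vec{\mathfrak{g}}$. Then I pre-multiply \eqref{eq:pe}(b) by $\Pi$: using $\Pi T = \Pi$, the two $\vec{\mathfrak{h}}$ terms cancel, leaving $\Pi\vec{\mathfrak{g}} = \Pi\vec{r}^{av}$. Combining these produces $\vec{\mathfrak{g}} = \Pi\vec{r}^{av}$, which is \eqref{eq:g}, and simultaneously establishes uniqueness of $\vec{\mathfrak{g}}$.

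For part (c), I will verify directly that the candidate pair $(\Pi\vec{r}^{av},\, (Z-\Pi)\vec{r}^{av})$ solves the PE. Equation~(a) holds because $T\Pi = \Pi$. For equation~(b),
\begin{equation*}
\vec{\mathfrak{g}} + (I-T)\vec{\mathfrak{h}} = \Pi\vec{r}^{av} + (I-T)(Z-\Pi)\vec{r}^{av} = \Pi\vec{r}^{av} + (I-\Pi)\vec{r}^{av} = \vec{r}^{av},
\end{equation*}
using $(I-T)Z = I-\Pi$ and $(I-T)\Pi = 0$. This calculation also establishes the existence claim of part (a). For part (d), given any solution $(\vec{\mathfrak{g}}, \vec{\mathfrak{h}})$, I set $\vec{\mathfrak{h}}' := \vec{\mathfrak{h}} + \Pi\vec{\mathfrak{h}}$; since $(I-T)\Pi = 0$, equation~(b) is preserved, and equation~(a) is unaffected because $\vec{\mathfrak{g}}$ is unchanged.

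The main obstacle is essentially organizational: the whole argument is short once the three operator identities are set up, and the subtlest of these is $(I-T)Z = I-\Pi$, which must be obtained by rearranging $(I-T+\Pi)Z = I$ and substituting $\Pi Z = \Pi$. Beyond that, the only subtlety is justifying Cesaro-averaging in part (b), which is immediate from Definition~\ref{defn:LimitingMatrix} in the finite-state setting.
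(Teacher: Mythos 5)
The paper does not prove this lemma --- it is imported verbatim from \cite{Hernandez-Lerma03} with only a citation --- so there is no in-paper argument to compare against. Your proof is correct and is the standard fundamental-matrix argument one finds in that reference. The three operator identities you isolate ($T\Pi=\Pi T=\Pi^2=\Pi$, $Z\Pi=\Pi Z=\Pi$, $(I-T)Z=I-\Pi$) all check out: $(I-T+\Pi)\Pi=\Pi$ gives $Z\Pi=\Pi$ by left-multiplying by $Z$, and subtracting $\Pi Z=\Pi$ from $(I-T+\Pi)Z=I$ gives $(I-T)Z=I-\Pi$. Part (b) is clean --- iterating $\vec{\mathfrak{g}}=T\vec{\mathfrak{g}}$, Cesaro-averaging (legitimate here since Definition~\ref{defn:LimitingMatrix} guarantees the limit exists for finite chains), and hitting the second PE equation with $\Pi$ so that $\Pi T=\Pi$ kills the $\vec{\mathfrak{h}}$ terms. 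For part (c) your candidate $(Z-\Pi)\vec{r}^{av}$ does equal $H\vec{r}^{av}=Z(I-\Pi)\vec{r}^{av}$ because $Z\Pi=\Pi$, and the verification $(I-T)(Z-\Pi)=I-\Pi$ is right since $(I-T)\Pi=0$; this simultaneously delivers existence for part (a). Part (d) as stated in the lemma is the specific claim that adding $\Pi\vec{\mathfrak{h}}$ preserves solutions, and your one-line check via $(I-T)\Pi=0$ proves exactly that. No gaps.
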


The PE  yields the quantity $\mathfrak{g}$, which can be used to compute the
probability of visiting $Avoid_{\mathfrak{r}}^{\mathcal{PM}^\varphi,\mathcal{G}}$ for the ssd-global
Markov chain in the following theorem. This probability can be used to enforce the constraint
$\eta^{ssd}_{av}=0$ in the optimization problem \eqref{eq:coc}.
\begin{theorem} \label{thm:sink}
The probability that the ssd-global Markov chain visits $(Avoid_{\mathfrak{r}}^{\mathcal{PM}^\varphi}
\times G^{ss})$ for an initial distribution $\iota_{init}'\in M_{\mathcal{S}\times G}$ is given by
  \begin{equation*}
     \Pr\left[\pi \to (Avoid_{\mathfrak{r}}^{\mathcal{PM}^\varphi} \times G^{ss})\left| 
           \iota_{init}'\right.\right] = \vec{\iota'}_{init}^{T}\vec{\mathfrak{g}}.
  \end{equation*}
\end{theorem}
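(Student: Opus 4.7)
The central idea is to exploit the fact that in the ssd-global Markov chain, the set $(Avoid_{\mathfrak{r}}^{\mathcal{PM}^\varphi}\times G^{ss})$ is absorbing and coincides exactly with the support of the reward $\vec{r}^{av}$. This mirrors Lemma~\ref{lem:AbsorptionEqualsEta}, but now adapted to the ssd-construction and to an arbitrary initial distribution $\iota'_{init}$. Once absorption plus indicator-reward is in hand, the proof reduces to identifying the long-run average reward with (i) the hitting probability on one side and (ii) $\vec{\iota'}_{init}^{T}\Pi_{ssd}\vec{r}^{av}=\vec{\iota'}_{init}^{T}\vec{\mathfrak{g}}$ on the other.

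\textbf{Key steps, in order.} First I would verify the absorption property: if $(s,g)\in Avoid_{\mathfrak{r}}^{\mathcal{PM}^\varphi}\times G^{ss}$, then \eqref{eq:partomega} forces every successor I-state $g'$ to lie in $G^{ss}$, so the ssd-transition rule~\eqref{eq:ssdMC} selects $T^\varphi_{mod}$; by~\eqref{eq:modifiedT}, $T^\varphi_{mod}$ keeps the POMDP component at $s$. Hence $\{s\}\times G^{ss}$ is invariant for each $s\in Avoid_{\mathfrak{r}}^{\mathcal{PM}^\varphi}$, and the whole set $(Avoid_{\mathfrak{r}}^{\mathcal{PM}^\varphi}\times G^{ss})$ is absorbing. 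Second, I would observe that $r^{av}([s,g]) = r^{av}_{\mathfrak{r}}(s)r^G_{\mathfrak{r}}(g)$ equals $1$ on precisely this absorbing set by~\eqref{eq:RecurrenceReward} and~\eqref{eq:fscreward}, and $0$ everywhere else.

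Third, combining these two facts: along any sample path $\pi$, if $\pi$ enters $(Avoid_{\mathfrak{r}}^{\mathcal{PM}^\varphi}\times G^{ss})$ at some finite time $\tau$, then $r^{av}(\pi_t)=1$ for all $t\ge \tau$ and the Ces\`aro average tends to $1$; otherwise $r^{av}(\pi_t)=0$ for all $t$ and the Ces\`aro average is $0$. Taking expectations under the initial distribution $\iota'_{init}$ yields
\begin{equation*}
\eta_{av}^{ssd}=\lim_{T\to\infty}\frac{1}{T}\mathbb{E}_{ssd}\!\left[\sum_{t=0}^{T}r^{av}(s_t,g_t)\,\Big|\,\iota'_{init}\right]=\Pr\!\left[\pi\to (Avoid_{\mathfrak{r}}^{\mathcal{PM}^\varphi}\times G^{ss})\,\big|\,\iota'_{init}\right].
\end{equation*}
Finally, by the preceding lemma, the unique solution $\vec{\mathfrak{g}}$ of \eqref{eq:pe} equals $\Pi_{ssd}\vec{r}^{av}$, and by Definition~\ref{defn:LimitingMatrix} the Ces\`aro-averaged expected reward from initial distribution $\iota'_{init}$ is exactly $\vec{\iota'}_{init}^{T}\Pi_{ssd}\vec{r}^{av}=\vec{\iota'}_{init}^{T}\vec{\mathfrak{g}}$. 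Equating the two expressions for $\eta_{av}^{ssd}$ gives the theorem.

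\textbf{Expected obstacle.} The routine calculations are clean, but two points require care. The first is that $(Avoid_{\mathfrak{r}}^{\mathcal{PM}^\varphi}\times G^{ss})$ is not a set of absorbing singletons: the POMDP coordinate is pinned by $T^\varphi_{mod}$, yet the I-state component may still drift within $G^{ss}$. One must therefore argue that this \emph{set} is absorbing (not individual states), which is what the reward indicator and the Ces\`aro-limit argument actually need. The second subtlety is justifying the exchange of limit and expectation when passing from pathwise Ces\`aro averages to $\eta_{av}^{ssd}$; this is standard on finite state spaces (bounded convergence plus the ergodic decomposition of finite Markov chains into recurrent classes as in Proposition~\ref{prop:LimitingMeasure}), but is the step that ties the argument to the Poisson-equation machinery rather than to an elementary hitting-probability computation.
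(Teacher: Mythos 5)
Your proposal is correct and follows essentially the same route as the paper: identify $(Avoid_{\mathfrak{r}}^{\mathcal{PM}^\varphi}\times G^{ss})$ as absorbing under $T^{\mathcal{PM}^\varphi,\mathcal{G}}_{ssd}$, note that $\vec{r}^{av}$ is exactly the indicator of that set, invoke the Lemma~\ref{lem:AbsorptionEqualsEta}-type identity to equate the hitting probability with the long-run Ces\`aro average reward, and then use $\vec{\mathfrak{g}}=\Pi_{ssd}\vec{r}^{av}$ to conclude $\vec{\iota'}_{init}^{T}\Pi_{ssd}\vec{r}^{av}=\vec{\iota'}_{init}^{T}\vec{\mathfrak{g}}$. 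The one place you are actually more careful than the paper is the absorption step: the paper asserts that each state of $(Avoid_{\mathfrak{r}}^{\mathcal{PM}^\varphi}\times G^{ss})$ is a sink, whereas (as you observe) only the POMDP coordinate is pinned by $T^\varphi_{mod}$ while the I-state may still move within $G^{ss}$, so it is the \emph{set} that is absorbing rather than the individual states --- and that weaker property is all the indicator-reward argument needs.
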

\begin{proof}
Note that under $T^{\mathcal{PM}^\varphi,\mathcal{G}}_{ssd}$, each state in
$(Avoid_{\mathfrak{r}}^{\mathcal{PM}^\varphi} \times G^{ss})$ is a sink by construction and
therefore recurrent.  Applying Lemma \ref{lem:AbsorptionEqualsEta} gives
  \begin{equation*}
     \begin{array}{rcl}
    &&  \Pr\left[\pi \to (Avoid_{\mathfrak{r}}^{\mathcal{PM}^\varphi} \times G^{ss})\left| 
          \iota_{init}'\right.\right] \\
   &=&  \underset{T\to\infty}{\lim}\frac{1}{T}\mathbbm{E}
       \left[\underset{t=0}{\overset{T}{\sum}}r^{av}([s_t,g_t])\left|\iota_{init}'\right.\right] \\
   & = & \vec{\iota'}_{init}^T\ \Pi_{ssd}\ \vec{\mathbbm{1}}^{\mathcal{S} \times 
               G}_{(Avoid_{\mathfrak{r}}^{\mathcal{PM}^\varphi} \times G)} \\
   & = & \vec{\iota'}_{init}^T\ \Pi_{ssd}\ \vec{r}^{av}\\
   & = & \vec{\iota'}_{init}^T\ \vec{\mathfrak{g}},
     \end{array}
  \end{equation*}
where line 1 implies line 2 due to \eqref{eq:lemma2reward} in Lemma~\ref{lem:AbsorptionEqualsEta} and~\eqref{eq:etaAverage}, and line 3 follows
from the fact that $\vec{r}^{av}$ can be re-written as an indicator vector $\vec{r}^{av} =
\vec{\mathbbm{1}}^{\mathcal{S} \times G}_{(Avoid_{\mathfrak{r}}^{\mathcal{PM}^\varphi} \times
G^{ss})}$.
\end{proof}

Theorem \ref{thm:sink} will be used in the sequel to enforce the constraint, $\eta_{av}^{ssd}(\mathfrak{r})
= 0$ in optimization  \eqref{eq:coc}.

\section{Bounded Policy Iteration for LTL Reward Maximization} \label{sec:DP}

We employ an stochastic dynamic programming~\cite{Bertsekas01} approach to solve the Conservative Optimization Criterion. In the general setting of an
arbitrary reward function and infinite state space, the existence of an optimal solution for the
average case is not guaranteed \cite{Lasserre88}. However, for the set of problems of
interest in this paper, the global Markov chain is a discrete time system that evolves over finite
state space, in which case the average reward does have an optimum. Additionally, as will be seen in
Section \ref{sec:boundedPI}, the optimal solution for the average case is not required for the
algorithm proposed herein. Only the \emph{evaluation} of the average reward value function under a
given sFSC is required to guarantee LTL satisfaction. Therefore, the Bellman equation  for the average
reward case is sufficient for this work. Next, the relevant dynamic programming
equations for both discounted and average rewards are summarized for the specific case of POMDPs
controlled by sFSCs.


\subsection{ Dynamic Programming Variants for POMDPs with sFSCs}

For POMDPs controlled by sFSCs, the dynamic program is developed in the global state space
$\mathcal{S}\times G$. The value function is defined over this global state space, and policy
iteration techniques must also be carried out in the global state space.

\subsubsection{Value Function for Discounted Reward Criterion}

For a given sFSC, $\mathcal{G}$, and the unmodified product-POMDP, the value function $V^{\beta}$ is
the expected discounted sum of rewards under $\mathcal{G}$, and can be computed by solving a set of
linear equations:
  \begin{multline*}
      V^{\beta}\left([s_i,g_k]\right) = r^{\beta}\left([s_i,g_i]\right) +\nonumber \\
    \quad  \beta \underset{g_k\in G,s_j\in\mathcal{S}}{\sum_{o\in\mathcal{O},\alpha\in Act}} 
         O(o|s_i)\omega(g_l,\alpha|g_k,o)T^{\varphi}(s_j|s_i,\alpha)V^{\beta}\left([s_j,g_l]\right).
  \end{multline*}
For the global Markov chain, the above can be written in vector notation as follows
\begin{equation}
\vec{V}^{\beta} = \vec{r}^{\beta}  + \beta T^{\mathcal{PM}^\varphi}\ \vec{V}^{\beta}.
\label{eq:discountedBE}
\end{equation}

Remark that \eqref{eq:discountedBE} is the Bellman Equation  for the discounted reward criterion. The
value function of the POMDP states, for a given I-state $g$ of the sFSC, can be described in vector
notation as $\vec{V}_g^\beta =\begin{bmatrix}V^{\beta}\left([s_1,g]\right) & V^{\beta}\left([s_2,g]\right) & \hdots &  V^{\beta}\left([s_{|\mathcal{S}|},g]\right)\end{bmatrix}^T$.
Given a distribution or belief, $\vec{b}$, over the the product states, a particular
{I-state's} value at the belief is the expectation

\begin{equation}
V_g^{\beta}(b) = \vec{b}^T \vec{V}^{\beta}_g.\ \ \ 
\label{eq:valueOfNode}
\end{equation}
If $\iota_{init}^{\varphi}$ is the initial distribution of the product-POMDP then, the best sFSC I-state can be selected as
\begin{equation*}
\kappa(g|\iota_{init}^{\varphi}) = \left\{
\begin{array}{ll}
1 & \mbox{ if } g = \underset{g'}{\mbox{ argmax }}\  V_{g'}^{\beta}(\vec{\iota}_{init}^{\varphi}) \\
0 & \mbox{ otherwise}.
\end{array}
\right.
\end{equation*}
In other words, the sFSC is started in the I-state with maximum expected value for the belief.

\begin{defn}[Value Function]
The \emph{value function} gives the value at any belief $b$ using the following
  \begin{equation} \label{eq:valueFunction}
      V^\beta(b) = \max_{g\in G} V_g^{\beta}(b).
  \end{equation}
\end{defn}
Clearly,  \eqref{eq:valueOfNode} shows that the value of a particular I-state is a linear
function of the belief state. The value function itself is piece-wise linear by taking the pointwise
maximum of all the I-state values at each belief state. 


\subsubsection*{Computational Complexity and Efficient Approximation} \label{sec:efficient1}

Solving a system of linear equations by direct methods is ${\bf O}(n^3)$ where $n$ is the number of
equations.  \eqref{eq:discountedBE} represents $|\mathcal{S}||G|$ equations. However, a
basic Richardson iteration can be applied. One starts with an arbitrary value of $V^{\beta}$,
typically 0, and repeatedly computes $
\vec{V}^{\beta,(0)}  =  0,$ and $
\vec{V}^{\beta,(t+1)}  =   \vec{r}^{\beta} + \beta T^{\mathcal{PM}^\varphi}\vec{V}^{\beta,(t)},$ until
 $||\vec{V}^{\beta,(t+1)} - \vec{V}^{\beta,(t)}||_\infty< \varepsilon_{\beta}$, where $\varepsilon_\beta>0$. During each iteration, the maximum number of operations required are ${\bf O}(|\mathcal{S}|^2|G|^2)$, however if the ssd-global Markov chain can be represented as a sparse matrix, then the complexity is linear.
 

\subsubsection{Value Function for Average Reward Criterion}
\label{sec:av_vf}
For a given sFSC $\mathcal{G}$, the value function $V^{\beta}$ is the expected discounted sum of rewards under $\mathcal{G}$, and can be computed by solving a set of linear equations:
\begin{multline*}
V^{av}\left([s_i,g_k]\right) = -\eta^{av}\left([s_i,g_i]\right) + r^{av}\left([s_i,g_i]\right) \\+  \underset{g_k\in G,s_j\in\mathcal{S}}{\sum_{o\in\mathcal{O},\alpha\in Act}} O(o|s_i)\omega(g_l,\alpha|g_k,o)T(s_j|s_i,\alpha)V^{av}\left([s_j,g_l]\right).
\end{multline*}
Writing the above  in vector notation for the ssd-global Markov chain gives 
$
\vec{V}^{av} = \left(-\vec{\rho}^{av} + \vec{r}^{av}\right) + T^{\mathcal{PM}^\varphi}_{ssd}\vec{V}^{av}.
\label{eq:averageBE}
$ 
The latter system of equations constitutes the Bellman equation  for the average reward
criterion. Note that this is the same as the second part of the PE  \eqref{eq:pe}(b),
by substituting $\vec{\mathfrak{g}} = \vec{\eta}^{av}$.

\subsubsection*{Computational complexity}
\label{sec:efficient2}
Since the value function of the average reward criterion is identical to
the Poisson Equation, the following considers the complexity of
solving the Poisson Equation. Again, the exact methods of
solving the linear system of equations is cubic is number of
equations, which is $2|\mathcal{S}||G|$ in
\eqref{eq:pe} with as many number of variables, which comprise of both
$\vec{V}^{av}$ and $\vec{\mathfrak{g}}$. 
 However, as will be shown later in this section, direct computation of the full $\vec{\mathfrak{g}}$ and
 $\vec{V}^{av}$ vectors will not be required frequently in the algorithm proposed
 in this section. The PE  will be directly inserted into the
 optimization software as a set of constraints in order to compute the values for the unknown vectors $\vec{\mathfrak{g}}$ and $\vec{V}^{av}$.
\subsection{Bellman Optimality  / DP Backup  - Discounted Case}
When the discounted case does not have constraints other than probability constraints on $\omega$ and $\kappa$, then at optimality the discounted value function satisfies the {\bf Bellman Optimality Equation}, which is also known as the {\bf DP Backup Equation}:
\begin{equation*}
V^{\beta }(b) = \max_{\alpha \in Act} \bigg( r^{\beta}(b) +\beta \sum_{o\in\mathcal{O}}\Pr(o|b) V^\beta(b_o^{\alpha})\bigg)
\end{equation*}
where $
\Pr(o|b) = \sum_{s\in\mathcal{S}}O(o|s)b(s)
$, $
 b_o^{\alpha}(s') = \sum_{s}T(s'|s,\alpha)\frac{O(o|s)b(s)}{\sum_{o'\in\mathcal{O}}O(o'|s)b(s)}$,
and $V^{\beta}(b_o^{\alpha})$ is computed using  \eqref{eq:valueFunction} and \eqref{eq:valueOfNode}.
The r.h.s. of the DP Backup Equation can be applied to any value function. The effect is an improvement (if possible) at every belief state.  
However, DP backup is difficult to use directly as it must be computed at each belief state in the belief space, which is uncountably infinite.


\subsection{Bounded  Policy Iteration for sFSCs}
Policy iteration incrementally improves a controller by alternating between two steps: Policy Evaluation and Policy Improvement, until convergence to an optimal policy. For the discounted reward criterion, policy evaluation amounts to solving  \eqref{eq:discountedBE}. During policy improvement, a dynamic programming update using the DP Backup Equation  is used. This results in the addition, merging, and pruning of I-states of the sFSC.

In \cite{PoupartB03} a methodology called the Bounded Policy Iteration is proposed, in which the sFSC is allowed to be stochastic. Next, we briefly outlines this methodology before showing how it can be adapted for solving the Conservative Optimization Criterion given by  \eqref{eq:coc}.

We are concerned with maximizing the expected long term discounted reward criterion over a general POMDP. The state transition probabilities are given by $T(s'|s,\alpha)$, and observation probabilities by $O(o|s)$. Most of this section follows from~\cite{PoupartB03} and \cite{hansen08}, where the authors showed that 
(1) Allowing stochastic I-state transitions and action selection (i.e., sFSC I-state transitions and actions sampled from distributions) enables improvement of the policy without having to add more I-states. 
(2) If the policy cannot be improved, then the algorithm has reached a local maximum. Specifically, there are some belief states at which no choice of $\omega$ for the current size of the sFSC allows the value function to be improved. In such a case, a small number of I-states can be added that improve the policy at precisely those belief states, thus escaping the local maximum.


\begin{defn}[Tangent Belief State]
A belief state $b$ is called a \emph{tangent belief state}, if $V^\beta(b)$ touches the DP Backup of
$V^\beta(b)$ from below. Since $V^\beta(b)$ must equal $V_{g}^{\beta}$ for some $g$, we also say
that the I-state $g$ is tangent to the backed up value function $V^\beta$ at $b$.
\end{defn}

\noindent
Equipped with this definition, the two steps involved in policy improvement can be carried out as follows.

\subsubsection*{Improving I-States by Solving a Linear Program}
An I-state $g$ is said to be \emph{improved} if the tunable parameters associated with that state can be adjusted so that $\vec{V}^{\beta}_g$ is increased. 
The improvement is posed as a linear program (LP) as follows:

\noindent
{\bf I-state Improvement LP:}
For the I-state $g$, the following LP is constructed over the unknowns $\epsilon$, $\omega(g',\alpha|g,o)$, $\forall g',\alpha,o$
\begin{eqnarray}
    &\underset{\epsilon,\omega(g',\alpha|g,o)}{\max} \ \ \ \epsilon& \nonumber \\
    &{\text{subject to}}&  \nonumber \\
    &{\text{Improvement Constraint:}}&  \nonumber\\
    &\hspace{-5cm}V^{\beta}([s,g]) + \epsilon  \le  r^{\beta}(s)& \nonumber  \\+  
&\beta \underset{s',g',\alpha,o}{\sum}O(o|s)\omega(g',\alpha|g,o)T(s'|s,\alpha)V^{\beta}([s',g']), \forall s, & \nonumber \\
&{\text{Probability Constraints:}}&  \nonumber\\
&\underset{(g',\alpha)\in G\times Act}{\sum}\omega(g',\alpha\mid g,o)=1,~~ \forall o \in O,& \nonumber \\
&\omega(g',\alpha \mid g, o)\ge 0,~~\forall g'\in G, \alpha \in Act, o \in O.&
\end{eqnarray}

The above LP searches for $\omega$ values that improve the I-state value vector $\vec{V}_g^\beta$ by maximizing the parameters $\epsilon$. If an improvement is found, i.e., $\epsilon > 0$, the parameters of the I-state are updated by the corresponding maximizing $\omega$. 

\subsubsection*{Escaping Local Maxima by Adding I-States}
\label{sec:lookahead}
Eventually no I-state can be improved with further iterations, i.e., $\forall g\in G$, the corresponding LP yields an optimal value of $\epsilon = 0$.


\begin{theorem}[\cite{PoupartB03}]
 Policy Iteration has reached a local maximum if and only if $V_{g}$ is tangent to the backed up value function for all $g\in G$.
\end{theorem}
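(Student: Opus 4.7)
The plan is to prove both directions via LP duality applied to the I-state Improvement LP, using throughout the monotonicity of the DP Backup operator $H$, namely $V^\beta(b)\le HV^\beta(b)$ for all beliefs $b$, together with $V^\beta(b) = \max_{g\in G} V_g^\beta(b)$, which yields the uniform bound $V_g^\beta(b)\le HV^\beta(b)$ for every $g\in G$ and every belief $b$.

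For the $(\Rightarrow)$ direction, suppose policy iteration has reached a local maximum, i.e., the I-state Improvement LP has optimal value $\epsilon_g^\star=0$ for every $g\in G$. Fix $g$ and consider the LP dual. Since the probability constraints form a bounded polytope of feasible $\omega(\cdot\mid g,\cdot)$'s and the improvement constraints are finite, strong LP duality applies. The dual multipliers attached to the $|\mathcal{S}|$ improvement constraints can be normalized to a probability distribution $b_g^\star\in M_{\mathcal{S}}$ because $\epsilon$ appears with coefficient $1$ in each of those constraints and with coefficient $1$ in the objective. The dual feasibility/complementary slackness conditions then yield that for every feasible stochastic $\omega$,
\begin{equation*}
\sum_{s} b_g^\star(s)\,V^\beta([s,g]) \;\ge\; \sum_{s}b_g^\star(s)\Big( r^\beta(s) + \beta\!\!\sum_{s',g',\alpha,o}\!\! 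O(o|s)\omega(g',\alpha|g,o)T(s'|s,\alpha)V^\beta([s',g'])\Big).
\end{equation*}
The left-hand side is $V_g^\beta(b_g^\star)$. Maximizing the right-hand side over all stochastic $\omega$ produces exactly $HV^\beta(b_g^\star)$, because a stochastic choice of $(g',\alpha)$ indexed by $o$ achieves the same supremum as the deterministic max in the Bellman Optimality Equation. Hence $V_g^\beta(b_g^\star)\ge HV^\beta(b_g^\star)$. Combined with $V_g^\beta\le HV^\beta$ everywhere, we obtain $V_g^\beta(b_g^\star)=HV^\beta(b_g^\star)$, which is precisely tangency at $b_g^\star$.

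For the $(\Leftarrow)$ direction, suppose $V_g$ is tangent to $HV^\beta$ at some belief $b_g^\star$, so $V_g^\beta(b_g^\star)=HV^\beta(b_g^\star)$ while $V_g^\beta(b)\le HV^\beta(b)$ elsewhere. Suppose for contradiction that the LP for $g$ admits a feasible $(\omega,\epsilon)$ with $\epsilon>0$. Multiply each improvement constraint by $b_g^\star(s)$ and sum over $s$:
\begin{equation*}
V_g^\beta(b_g^\star) + \epsilon \;\le\; \sum_s b_g^\star(s) r^\beta(s) + \beta\!\!\sum_{s,s',g',\alpha,o}\!\! b_g^\star(s)O(o|s)\omega(g',\alpha|g,o)T(s'|s,\alpha)V^\beta([s',g']) \;\le\; HV^\beta(b_g^\star),
\end{equation*}
where the second inequality holds because the middle expression is a specific one-step look-ahead bounded by the Bellman backup maximum. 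Therefore $V_g^\beta(b_g^\star)+\epsilon \le HV^\beta(b_g^\star) = V_g^\beta(b_g^\star)$, contradicting $\epsilon>0$. Hence $\epsilon_g^\star=0$, so no I-state can be improved and policy iteration is at a local maximum.

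The main obstacle I expect is the LP duality step in the forward direction: one must identify the dual multipliers for the improvement constraints with a bona fide probability distribution over $\mathcal{S}$, and then argue that the supremum over stochastic $\omega$ of the right-hand side coincides with the deterministic Bellman Backup $HV^\beta(b_g^\star)$. The latter is true because, for fixed $b_g^\star$ and observation, maximizing a linear function of $\omega(\cdot|g,o)$ over the simplex is attained at a vertex (a deterministic $(g',\alpha)$ choice), recovering the $\max$ in the Backup Equation. Once this equivalence is stated cleanly, both directions become short and symmetric.
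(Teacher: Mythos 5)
The paper itself offers no proof of this theorem --- it is imported by citation from Poupart and Boutilier --- so there is no in-paper argument to compare against; your LP-duality proof is the standard one from that source and is structurally sound in both directions. The dual bookkeeping in the forward direction is correct: since $\epsilon$ has coefficient $1$ in the objective and in each improvement constraint, the dual multipliers $y(s)$ of those constraints satisfy $\sum_s y(s)=1$ and hence form the tangent belief $b_g^\star$, and the backward direction's ``multiply each improvement constraint by $b_g^\star(s)$ and sum'' is exactly right. The one step that does not close as written is the identification of $\sup_\omega(\mathrm{RHS})$ with the paper's $HV^\beta(b_g^\star)$. Because $\omega(g',\alpha\mid g,o)$ lets the action depend on the observation, maximizing the right-hand side over stochastic $\omega$ yields $r^\beta(b)+\beta\sum_o \Pr(o\mid b)\max_\alpha V^\beta(b_o^\alpha)$, with the maximum \emph{inside} the observation sum, whereas the paper's DP Backup Equation places $\max_\alpha$ \emph{outside} it; the former can strictly exceed the latter. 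For the same reason your standing premise $V_g^\beta(b)\le HV^\beta(b)$ for all $b$ is not guaranteed under the max-outside operator: an I-state whose action choice genuinely varies with $o$ can beat every single fixed action. Both directions become airtight, and your ``vertex of the simplex'' remark becomes exactly correct, if tangency is read against the observation-conditioned backup operator --- which is the operator the Improvement LP actually implements and the one used in the cited source --- so you should either state the theorem with that operator or add a line reconciling the two conventions.
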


 In order to escape local maxima, the controller can add more I-states to its structure. Here the tangency criterion becomes useful. First, note that the dual variables corresponding to the Improvement Constraints in the LP provides the tangent belief state(s) when $\epsilon = 0$. 
At a local maximum, each of the $|G|$ linear programs yield some tangent belief states. Most implementations of LP solvers solve the dual variables simultaneously and so these tangent beliefs are readily available as a by-product of the optimization process introduced above. 

\begin{footnotesize}
\begin{algorithm}
\caption{Bounded PI: Adding I-States to Escape Local Maxima}
\label{algo:lookahead}
\begin{algorithmic}[1]
\REQUIRE Set $B$ of tangent beliefs from policy improvement LPs for each I-state, $N_{new}$ the maximum number of I-states to add.
\STATE $N_{added} \leftarrow 0$.
\REPEAT {}
\STATE Pick $b\in B$, $B = B\backslash\{b\}$.
  \STATE $Fwd = \emptyset$
  \FORALL {$(\alpha,o)\in (Act\times \mathcal{O})$}
    \IF { $Pr(o|b) = \sum_{s\in\mathcal{S}}b(s)O(o|s) > 0$ }
      \STATE  Look ahead one step to compute forwarded beliefs 
$
          b_{o,\alpha}(s') = \sum_{s}T(s'|s,\alpha)\frac{O(o|s)b(s)}{\sum_{o'\in\mathcal{O}}O(o'|s)b(s)}.
$
      \STATE $Fwd \leftarrow Fwd \cup \{b_{o,\alpha}\}$
    \ENDIF
  \ENDFOR
\FORALL {$b_{fwd}\in Fwd$}
\STATE Apply the r.h.s. of DP Backup  to $b_{fwd}$,
$
V^{\beta,backedup}(b_{fwd}) =  \max_{\alpha \in Act}\big\{ r^{\beta}(b_{fwd})   +\beta \sum_{o\in\mathcal{O}}\Pr(o|b_{fwd})\big(\max_{g\in G} b_{fwd}^{o,\alpha}(s)V_g^\beta(s)\big)\big\}
$
where, $b_{fwd}^{o,\alpha}$ is computed for reach product state $s'\in \mathcal{S}$ as follows
$
b_{fwd}^{o,\alpha}(s') = \sum_{s}T(s'|s,\alpha)\frac{O(o|s)b_{fwd}(s)}{\sum_{o'\in\mathcal{O}}O(o'|s)b_{fwd}(s)}.
$
\STATE Note the maximizing action $\alpha^*$ and I-state $g^*$.

\IF{$V^{\beta,backedup}(b_{fwd}) > V^{\beta}(b_{fwd})$}
\STATE Add new deterministic I-state $g_{new}$ such that $\omega(g_{new}|g^*,\alpha^*,o) = 1$ $\forall o\in\mathcal{O}$.
\STATE $N_{added} \leftarrow N_{added} + 1$
\ENDIF
\IF { $N_{added} \ge N_{new}$ }
\RETURN
\ENDIF
\ENDFOR
\UNTIL $B = \emptyset$.
\end{algorithmic}
\end{algorithm}
\end{footnotesize}
 Algorithm \ref{algo:lookahead}~\cite{PoupartB03} uses the tangent beliefs to escape the local maximum. 

\subsection{Bounded Policy Iteration for LTL Rewards}
\label{sec:boundedPI}
This section, shows how the bounded policy iteration methodology described in the previous section can be modified to solve  the Conservative Optimization Criterion~\eqref{eq:coc}.

Algorithm \ref{algo:policyiteration} outlines the main steps in the bounded policy iteration for the Conservative Optimization Criterion. Again, there are two distinct parts of the policy iteration. First, policy evaluation in which $V^{\beta}$ is computed whenever some parameters of the controller changes (Steps 2, 10 and 18). The actual optimization algorithm to accomplish this step is found in Section \ref{sec:LP}. Second, after evaluating the current value function, an improvement is carried out either by changing the parameters of existing nodes, or if no new parameters can improve any node, then a fixed number of nodes are added to escape the local maxima (Steps 14-17). This is described in Section \ref{sec:escape}.

\begin{algorithm}
\caption{Bounded Policy Iteration For Conservative Optimization Criterion}
\label{algo:policyiteration}
\begin{algorithmic}[1]
\REQUIRE (a) An initial feasible sFSC, $\mathcal{G}$ with I-states $G=\{G^{tr},G^{ss}\}$, such that $\eta_{av}^{ssd}(\mathfrak{r}) = 0$. (b) Maximum size of sFSC $N_{max}$. (c) $N_{new} \le N_{max}$ number of I-states
\STATE $improved \leftarrow True$
\STATE Compute the value vectors, $\vec{V}^{\beta}$ of the discounted reward criterion $\eta_\beta$ as in  \eqref{eq:discountedBE}, or efficient approximation in Section \ref{sec:efficient1}.
\WHILE {$|G| \le N_{max}$ \AND $improved = True$}
\STATE $improved \leftarrow False$
\FORALL {I-states $g \in G$}
\STATE  Set up the Constrained Improvement LP as in Section \ref{sec:LP}.
\IF {Improvement LP results in optimal $\epsilon > 0$}
\STATE Replace the parameters for I-state $g$
\STATE $improved \leftarrow True$
\STATE Compute the value vectors, $\vec{V}^{\beta}_g$ of the discounted reward criterion $\eta_\beta$ as in  \eqref{eq:discountedBE}, or efficient approximation in Section \ref{sec:efficient1}.
\ENDIF
\ENDFOR
\IF {$improved = False$ \AND $|G| < N_{max}$}
\STATE $n_{added} \leftarrow 0$
\STATE $N'_{new} \leftarrow \min(N_{new},N_{max}-|G|)$
\STATE Try to add $N'_{new}$ I-state(s) to $\mathcal{G}$ according to constrained DP backup in Section \ref{sec:escape}.
\STATE $n_{added} \leftarrow $ actual number of I-states added in previous step.
\IF {$n_{added} > 0$}
\STATE $improved \leftarrow True$
\STATE Compute the value vectors, $\vec{V}^{\beta}_g$ of the discounted reward criterion $\eta_\beta$ as in  \eqref{eq:discountedBE}, or efficient approximation in Section \ref{sec:efficient1}.
\ENDIF
\ENDIF
\ENDWHILE
\ENSURE $\mathcal{G}$
\end{algorithmic}
\end{algorithm}
The two parts of policy improvement, namely the optimization to improve a given node, and addition of new nodes to escape local maxima are explained in detail in the subsequent sections.

\subsubsection{Node Improvement}
\label{sec:LP}
The first observation is that the search over $\kappa$ can be dropped. This simplification occurs because the initial node is chosen by computing the best valued node for the initial belief, i.e., $\kappa(g_{init})  =  1$, where $g_{init}  =  \underset{g}{\mbox{argmax}}~ \left(\vec{\iota}_{init}^{\varphi}\right)^T\vec{V}^{\beta}_g$.


Once this initial node has been selected, the above objective  differs from the typical discounted reward maximization problem due the presence of the new constraint $
\eta_{av}^{ssd}(\mathfrak{r}) = 0,
$
 which must be incorporated into the optimization algorithm. Using Theorem \ref{thm:sink}, the above constraint can be rewritten as 
$
\eta_{av}^{ssd}(\mathfrak{r}) = 0 \iff \left(\vec{\iota}_{init}^{ssd}\right)^T\vec{\mathfrak{g}}=0,
$ 
where $\vec{\mathfrak{g}}$ uniquely solves the PE  \eqref{eq:pe}. This allows the node improvement to be written as a bilinear program. Again, one node $g$ is improved at a time while holding all other nodes constant as follows.\\
\noindent
{\bf I-state Improvement Bilinear Program:}
\begin{eqnarray}
& \underset{\epsilon,\omega(g',\alpha|g,o),\vec{\mathfrak{g}}, \vec{V}_{av}}{\max} \ \ \ \epsilon& \nonumber \\
&{\text{subject to}}  &\nonumber  \\
&{\text{Improvement Constraints:}}& \nonumber \\
&  V^{\beta}([s,g]) + \epsilon  \le  r^{\beta}(s) + \beta \underset{s',g',\alpha,o}{\sum}\big (O(o|s)& \nonumber \\
&\times \omega(g',\alpha|g,o)T^{\mathcal{PM}^\varphi}(s'|s,\alpha)V^{\beta}([s',g'])\big),~\forall s& \nonumber \\
&{\text{Poisson Equation  (if $g \in G^{ss}$):}}& \nonumber \\
& \vec{V}^{av} + \vec{\mathfrak{g}}  = \vec{r}^{av} + T^{\mathcal{PM}^\varphi}_{mod}\vec{V}^{av}& \nonumber \\
& \vec{\mathfrak{g}} = T^{\mathcal{PM}^{\varphi}}_{mod}\vec{\mathfrak{g}}&  \nonumber\\
&{\text{Feasibility Constraints (if $g \in G^{ss}$):}} &  \nonumber\\
& \left(\vec{\iota}^{ss}_{init,g}\right)^T\vec{\mathfrak{g}}  =  0& \nonumber \\
&{\text{FSC Structure Constraints (if $g \in G^{ss}$):}}&  \nonumber\\
& \omega(g',\alpha|g,o)  =  0\ \mbox{ if } g'\in G^{tr}& \nonumber \\
&{\text{Probability Constraints:}}& \nonumber \\
& \underset{g',\alpha}{\sum}\omega(g',\alpha|g,o)  = 1,~\forall o,&  \nonumber\\
& \omega(g',\alpha|g,o)  \ge  0,\ \ \ \forall g',\alpha, o.\vspace{1cm}& \nonumber \\ \label{eq:bilinear}
\end{eqnarray}

\noindent
Note that a node in $G^{tr}$ does not have to guarantee that product-POMDP states are not allowed to visit $Avoid_{\mathfrak{r}}^{\mathcal{PM}^\varphi}$ and hence the extra Poisson  and Feasibility Constraints that appear above need only be applied to I-state $g \in G^{ss}$. Furthermore, the sFSC structure constraints ensure that once the execution has transitioned to steady state, the I-states in $G^{tr}$ can no longer be visited. 

The Poisson  Constraints introduce bilinearity in the optimization. This is because the term $T^{\mathcal{PM}^{\varphi}}_{mod}$, which is linear in $\omega(g',\alpha|g,o)$, is multiplied by the unknowns $\vec{V}^{av}$ and $\vec{\mathfrak{g}}$ in the two sets of constraints that form the PE.

\subsubsection{Convex Relaxation of Bilinear Terms}
Bilinear problems are in general hard to solve  \cite{burer2009nonconvex}, unless they are equivalent to positive semidefinite or second order cone programs, which make the problem convex. Neither of these convexity assumptions hold for the bilinear constraints in~\eqref{eq:bilinear}. However, several convex relaxation schemes exist for bilinear problems. 
In this paper, we utilize a linear relaxation resulting from  the Reformulation-Linearization Technique (RLT) \cite{rlt2}, which is summarized below, to obtain a possibly sub-optimal solution at each improvement step.

While RLT can be applied to a wide range of problems including discrete combinatorial problems, it is introduced here for the case of Quadratically Constrained Quadratic Problems (QCQPs) over unknowns $x\in\mathbb{R}^n$, $y\in\mathbb{R}^m$. The notation follows  from \cite{qualizza2012linear}. A QCQP can be written as
\begin{equation*}
\begin{array}{lll}
\max & x^TQ_ox + a_o^Tx + b_o^Ty & \\
\mbox{subject to} & & \\
& x^TQ_kx+a_k^Tx + b_k^Ty \le c_k & \mbox{for } k=1,2,\dots,p, \\
& l_{x_i} \le x_i \le u_{x_i} & \mbox{for } i=1,2,\dots,n, \\
& l_{y_j} \le y_j \le u_{y_j} & \mbox{for } j=1,2,\dots,m. \\
\end{array}
\end{equation*}
RLT is carried out as follows, for each $x_i,x_j$ such that the product term $x_ix_j$ is non zero in either the objective or the constraints, a new variable $X_{ij}$ is introduced, which replaces the product $x_ix_j$ in the problem. In addition, the bounds $l_{x_i},l_{x_j},u_{x_i},u_{x_j}$ are utilized to produce four new linear constraints
\begin{equation*}
\begin{array}{rcl}
X_{ij}-l_{x_i}x_j - l_{x_j}x_i & \ge & -l_{x_i}l_{x_j}, \\
X_{ij}-u_{x_i}x_j - u_{x_j}x_i & \ge & -u_{x_i}u_{x_j}, \\
X_{ij}-l_{x_i}x_j - u_{x_j}x_i & \le & -l_{x_i}u_{x_j}, \\
X_{ij}-u_{x_i}x_j - l_{x_j}x_i & \le & -u_{x_i}l_{x_j}. \\
\end{array}
\end{equation*}
The above constraints are the McCormick convex envelopes~\cite{mccormick1976computability}. For bilinear programming with bounded variables, the McCormick convex envelopes are successively used in algorithms such as branch and bound \cite{Wood66} to successively obtain tighter relaxations to obtain globally optimal solutions. An efficient solver that incorporates this methodology is~\cite{yalmip}. 

The  bilinearity arises because the rows ${T^{\mathcal{PM}^{\varphi}}_{mod,g}}$ of $T^{\mathcal{PM}^\varphi}_{mod}$, which are linear terms of the unknowns $\omega(.,.|g,.)$, are multiplied by $\vec{V}^{av}$ and $\vec{\mathfrak{g}}$. The other rows of $T^{\mathcal{PM}^\varphi}_{mod}$ are not functions of the unknowns and their values are used from the values in the previous policy evaluation step. The total number of bilinear terms in both sets of equations is given by $2\times|\mathcal{S}||\mathcal{O}||G||Act|$. Moreover, applying the convex relaxation requires that all terms appearing in bilinear products must have finite bounds. For the unknowns $\omega(.,.|g,.)$, $\vec{\mathfrak{g}}$ and $\vec{V}^{av}$ these bounds are given by
\begin{equation*}
\begin{array}{ccccc}
\vec{0} & \le & \omega(.,.|g,.)& \le& \vec{1}, \\
\vec{0} &\le &\vec{\mathfrak{g}} &\le &\vec{1}, \\
-\vec{M}_1 &\le& \vec{V}^{av} &\le& \vec{M}_2,
\end{array}
\label{eq:bounds}
\end{equation*}
where $M_1,\ M_2$ are large positive constants that are manually selected. This is
because the feasibility set for $\vec{V}^{av}$ is dependent on the
eigenvalues of $I-T^{\mathcal{PM}^{\varphi}}_{mod,g_{|G|}}$ \cite{Makowski94onthe}, which is
difficult to represent in terms of the optimization
variables. During numerical implementation, this issue was not
found to adversely effect the solution quality. This may be due
to the fact that $\vec{V}^{av}$ does not appear in either the objective or in
the feasibility constraints of  \eqref{eq:bilinear}. In fact,
for the choice of $\omega$ only constrains the value
$\vec{\mathfrak{g}}$, whereas given $\omega$ and $\vec{\mathfrak{g}}$
a feasible value of $\vec{V}^{av}$ can always be found.

\subsubsection{Addition of I-States to Escape Local Maxima}
\label{sec:escape}
When no I-state Improvement LP yields $\epsilon > 0$, a local maxima for the bounded policy iteration has been reached. The dual variables corresponding to the Improvement Constraints in  \eqref{eq:bilinear} again give those belief states that are tangent to the backed up value function. The process for adding I-states  involves forwarding the tangent beliefs one step and then checking if the value of those forwarded beliefs can be improved. However, an additional check for recurrence constraints has to be made, if the involved I-state belongs to the $G^{ss}$ states of the sFSC controller. In addition, if an I-state is added to the sFSC, it must also be assigned to either $G^{tr}$ or $G^{ss}$, because the next policy evaluation iteration depends on the I-state partitioning in the computation of $T^{\mathcal{PM}^\varphi}_{mod}$. The procedure for adding I-states is provided in Algorithm \ref{algo:addnode}.

\begin{algorithm}
\caption{Adding I-states to Escape Local Maxima of Conservative Optimization Criterion}
\label{algo:addnode}
\begin{algorithmic}[1]
\REQUIRE (a) Set $B$ of tangent beliefs for each I-state. (b) A function $node:B\to G$ identifying the I-state which yields each tangent belief. (c) $N_{new}$ the maximum number of I-states to add.
\STATE $N_{added} \leftarrow 0$.
\REPEAT {}
  \STATE Pick $b\in B$, $B \leftarrow B\backslash\{b\}, g \leftarrow node(b)$.
  \STATE Compute the set of forwarded beliefs, $Fwd$, as in Steps~4-10
  of Algorithm \ref{algo:lookahead}.
  \FORALL {$b_{fwd}\in Fwd$}
    \IF {$g\in G^{tr}$}
      \STATE $candidates \leftarrow G \times Act$.
    \ELSE
      \STATE $candidates \leftarrow G^{ss} \times Act$.
      \STATE $candidates \leftarrow$ PruneCandidates($candidates$,
      $b_{fwd}$, $\vec{V}^{av}$, $\vec{\mathfrak{g}}$) using
      Algorithm \ref{algo:prune}.
    \ENDIF
    \IF {$candidates \leftarrow \emptyset$}
      \STATE Go to step 5.
    \ENDIF
    \STATE Apply the r.h.s. of DP Backup  to $b_{fwd}$,
$
     V^{\beta,backedup}(b_{fwd}) = \max_{(g,\alpha) \in candidates}\big\{ r^{\beta}(b_{fwd}) +\beta \sum_{o\in\mathcal{O}}\Pr(o|b_{fwd})\big(b_{fwd}^{o,\alpha}(s)V_g^\beta(s)\big)\big\},
$ 
    where, $b_{fwd}^{o,\alpha}$ is computed for each product state $s'\in \mathcal{S}$ as follows
$
b_{fwd}^{o,\alpha}(s') = \sum_{s}T(s'|s,\alpha)\frac{O(o|s)b_{fwd}(s)}{\sum_{o'\in\mathcal{O}}O(o'|s)b_{fwd}(s)}.
$
    \STATE Note the maximizing action $\alpha^*$ and I-state $g^*$.
  \ENDFOR
  \IF{$V^{\beta,backedup}(b_{fwd}) > V^{\beta}(b_{fwd})$}
      \STATE Add new deterministic I-state $g_{new}$ such that $\omega(g_{new}|g^*,\alpha^*,o) = 1$ $\forall o\in\mathcal{O}$.
      \STATE Assign $g_{new}$ to correct sFSC partition as follows:
$
        g_{new}\in \left\{
          \begin{array}{ll}
            G^{tr} & \mbox{ if } g\in G^{tr} \\
            G^{ss} & \mbox{ otherwise.}
          \end{array}\right.
$
      \STATE $N_{added} \leftarrow N_{added} + 1$.
  \ENDIF
  \IF  {$N_{added} \ge N_{new}$}
    \RETURN
  \ENDIF
\UNTIL $B = \emptyset$.
\end{algorithmic}
\end{algorithm}
 
Algorithm \ref{algo:addnode} can be understood as follows. Assume that
a tangent belief $b$ exists for some I-state $g$. Similar to Algorithm~\ref{algo:lookahead}, instead of directly improving the value of the
tangent belief, the algorithm tries to improve the value of forwarded
beliefs reachable in one step from the tangent beliefs. This is given
in Step 4 of Algorithm \ref{algo:addnode}. Recall from Section
\ref{sec:lookahead} that when a new I-state is added, its successor
states are chosen from the existing I-states. A similar approach is
used in Algorithm \ref{algo:addnode}. However, a new node may be added
to either $G^{tr}$ or $G^{ss}$ depending on the I-state that generated
the original tangent belief. Recall that I-states in $G^{ss}$ have two
additional constraints. First, no state in $G^{ss}$ can transition to
any state in $G^{tr}$. This is enforced by limiting the successor
state candidates in Steps 6-9. Secondly, for improving a node in
$G^{ss}$, the allowed actions and transitions must satisfy the
Poisson  Constraints of  \eqref{eq:bilinear}. This further reduces or
prunes the possible successor candidates in Step 10, which is
elaborated as a separate procedure in
Algorithm \ref{algo:prune}. The rest of the procedure is identical to
Algorithm \ref{algo:lookahead}, except for Step 20, in which any newly
added I-state is placed in the correct partition of $G^{tr}$ or
$G^{ss}$.

Algorithm \ref{algo:prune} prevents any new I-states
to choose a pair of action and successor I-state that may
violate the Feasibility Constraints of  \eqref{eq:bilinear}. In
order to carry out this procedure, a phantom I-state, $g_{phantom}\in
G^{ss}$ is temporarily added to the current sFSC for a pair
$(g,\alpha)\in candidates$. Next, the modified transition distribution
$T^{\mathcal{PM}^{\varphi}}_{mod,phantom}$ is computed using 
\eqref{eq:modifiedT}, and the PE  is solved to obtain a new
$\vec{\mathfrak{g}}$ which can be used to verify the
Feasibility Constraint. If this constraint is violated. i.e.,  then
$(g,\alpha)$ is removed from the set $candidates$. Note that the
algorithm works on a copy of the original sFSC, and the solution of the
PE  computed at the last policy evaluation step. The addition of
$g_{phantom}$, and re-computation of the PE  is only used within Algorithm~\ref{algo:prune}.
\begin{algorithm}
\caption{Pruning candidate successor I-states and actions to satisfy recurrence constraints.}
\label{algo:prune}
\begin{algorithmic}[1]
\REQUIRE Set of candidate successor states and actions $candidates
\subseteq G^{ss}\times Act$.
\FORALL {$(g,\alpha)\in candidates$}
  \STATE Add new state $g_{phantom}$ to $G^{ss}$ to create a larger sFSC
  where,
$
    \omega(g,a|g_{phantom},o)=1,\ \ \forall o\in \mathcal{O}.
$
  \STATE Compute $T^{\mathcal{PM}^{\varphi}}_{mod}$ and
  $\vec{\iota}^{ss}_{init}$ for the new larger global ssd Markov chain.
  \STATE Solve PE  for the new larger global Markov chain
  to obtain solutions $\vec{\mathfrak{g}}$, $\vec{V}^{av}$.
  \IF {Any Feasibility Constraints in  \eqref{eq:bilinear} are
    violated under the larger sFSC}
  \STATE $candidate \leftarrow candidates \backslash \{(g,\alpha)\}$.
\ENDIF
\ENDFOR
\RETURN $candidates$
\end{algorithmic}
\end{algorithm}
\subsubsection{Finding an Initial Feasible Controller} \label{sec:PIFeasible}
So far, it has not been shown how an initial feasible controller may be found to begin the policy
iteration. A feasible sFSC is one which produces at least one $\varphi$-feasible recurrent set
(Definition \ref{defn:feasibleRecSet}). This problem can be posed as a bilinear program, as well. Assume a
size $|G|$ and partitioning $G = \{G^{tr},G^{ss}\}$ of the sFSC has been chosen such that
$|G^{tr}|>0$ and $|G^{ss}|>0$. Next, consider the PE  for the ssd-global Markov chain,
in which the states in $Avoid_{\mathfrak{r}}^{\mathcal{PM}^\varphi}\times G^{ss}$ are
sinks. However, instead of the charge of the PE  being $r^{av}$, consider the charge
$r^{\beta}$ in which the states in $Repeat_{\mathfrak{r}}^{\mathcal{PM}^\varphi}\times G^{ss}$ are
rewarded. This  is given by $
    \vec{\mathfrak{g}}_{feas}  =  T^{\mathcal{PM}^\varphi}_{mod}\vec{\mathfrak{g}}_{feas}
                                 $ and $
    \vec{V}^{av}_{feas} + \vec{\mathfrak{g}}_{feas}  =  \vec{r}^{\beta} + 
             T^{\mathcal{PM}^\varphi}_{mod}\vec{V}^{av}_{feas}$. 
Then, it can be shown that some state in $Repeat_{\mathfrak{r}}^{\mathcal{PM}^{\varphi}} \times G^{ss}$ is recurrent and can be reached from the initial distribution with positive probability if and only if $\exists g\in G^{ss}$ such that $
\left(\vec{\iota}_{init}^{\mathcal{PM}^\varphi}\right)^T \vec{\mathfrak{g}}_{feas,g} > 0.
$ 
However, the constraint of never visiting the avoid states still applies. These procedures and constraints can be collected together in the following bilinear maximization problem.

\begin{equation*}
\begin{array}{lrcl}
& \multicolumn{3}{c}{  \underset{\omega,\vec{V}^{av},\vec{V}^{av}_{feas},\vec{\mathfrak{g}},\vec{\mathfrak{g}}_{feas}}{\max } \left(\vec{\iota}_{init}^{\mathcal{PM}^\varphi}\right)^T \vec{\mathfrak{g}}_{feas,g}} \\
\multicolumn{1}{l}{ \mbox{subject to}}&  & \\
\multicolumn{4}{l}{\mbox{Poisson Equation 1:}}  \\
& \vec{V}^{av} + \vec{\mathfrak{g}} & = & \vec{r}^{av} + T^{\mathcal{PM}^\varphi}_{mod}\vec{V}^{av} \\
& \vec{\mathfrak{g}} & = & T^{\mathcal{PM}^{\varphi}}_{mod}\vec{\mathfrak{g}} \\
\multicolumn{4}{l}{\mbox{Poisson Equation 2:}}  \\
& \vec{V}^{av}_{feas} + \vec{\mathfrak{g}}_{beta} & = & \vec{r}^{\beta} + T^{\mathcal{PM}^\varphi}_{mod}\vec{V}^{av}_{feas} \\
& \vec{\mathfrak{g}}_{feas}& =& T^{\mathcal{PM}^{\varphi}}_{mod}\vec{\mathfrak{g}}_{feas} \\
\multicolumn{4}{l}{\mbox{Feasibility constraints ( $\forall g \in G^{ss}$)}}  \\
& \left(\vec{\iota}^{ss}_{init,g}\right)^T\vec{\mathfrak{g}} & = & 0 \\
\multicolumn{4}{l}{\mbox{FSC Structure Constratins:}}  \\
& \omega(g',\alpha|g,o) & = & 0\ \mbox{if } g \in G^{tr} \mbox{ and } g \in G^{ss} \\
\multicolumn{4}{l}{\mbox{Probability constraints:}}  \\
& \underset{g',\alpha}{\sum}\omega(g',\alpha|g,o) & = &1\ \ \ \forall o \\
& \omega(g',\alpha|g,o) & \ge & 0\ \ \ \forall g',\alpha, o
\end{array}
\label{eq:bilinearfeasibility}
\end{equation*}
Any positive value of the objective $\left(\vec{\iota}_{init}^{\mathcal{PM}^\varphi}\right)^T \vec{\mathfrak{g}}_{feas,g}$ gives a feasible controller, and therefore the optimization need not be carried out to optimality. If the problem is infeasible, then states in $G^{ss}$ can be successively added to search for a positive objective.

\section{Case Studies: Robot Navigation}
\label{sec:casestudiesPI}
In this section, case studies for the bounded policy iteration algorithm described in Section \ref{sec:boundedPI} are shown. The first example demonstrates the effectiveness of the algorithm to optimize the transient phase of the controlled system, while the second example illustrates the effectiveness in improving the steady state behavior of the controlled system. The case studies use a grid world system model, whose graphical representation  is given in Figure \ref{fig:gwa_again}.

%
%
\subsection{Robot Navigation POMDP Set-Up}  \label{ex:GWA}
The world model is represented by an $M \times N$ grid, with $M=7$ fixed and varying $N \ge 1$. A robot can move from cell to
cell. Thus, the state space is given by
 $ \mathcal{S} = \{s_{i}|i=x+My,x\in\{0,\dots,M-1\},y \in \{0,\dots,N-1\}\}. $ 
The action set available to the robot is $ Act = \{Right,\ Left,\ Up,\ Down,\ Stop\}. $ 
The actions $Right,\ Left,\ Up$ and $Down$, move the robot from its current cell to a neighboring
cell, with some uncertainty. The state transition probabilities for various cell types (near a wall,
or interior) are shown for action $Right$ in Figure \ref{fig:gwa_again}.  Actions $Left,\ Up$, and
$Down$ have analogous definitions.  For the deterministic action $Stop$, the robot stays in its
current cell. Partial observability arises because the robot cannot precisely determine its cell
location from measurements directly. The observation space is
$ \mathcal{O}=\{o_{i} |i=x+My, x\in\{0,\dots,M-1\},y\in\{0,\dots,N-1\}\}. $
In the robot's actual cell position (dark blue), the sensed location has a distribution over the
actual position and nearby cells (light blue). The robot starts in Cell 1(yellow):
$\iota_{init}(s_1)=1$.  While the robot's initial state is known exactly in this example, it is not
required. Finally, there are three atomic propositions of interest in this grid world, giving $AP = \{a,b,c\}.$ 
In cell 0, only $a$ is true, while, respectively, only $b$ and $c$ are true in cells 6 and 3.

\begin{figure}
\centering
\includegraphics[width=0.45\textwidth]{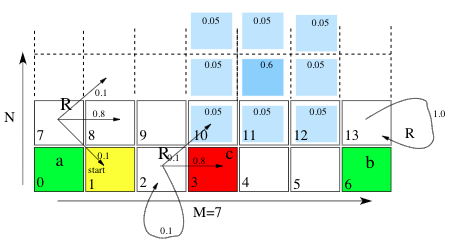} \\

\caption{Robot navigation POMDP model used in the case studies.}
\label{fig:gwa_again}
\end{figure}
\subsection{Case Study I - Stability with Safety}
\noindent
{\bf LTL Specification: } The LTL specification is given by $
\varphi_2 = \diamondsuit \boxvoid b \wedge \boxvoid \neg c,
$
where $b$ and $c$, shown in Figure \ref{fig:gwa_again} are requirements for the robot to navigate to cell 6, and stay there, while avoiding cell 3, respectively.

\noindent
{\bf Results: } The difficulty in this specification is that the robot must localize itself to the top edge of the corridor before moving rightward to cell 6. Note that a random walk performed by the robot is feasible: there is a finite probability that actions chosen randomly will lead the robot to cell 6 without visiting cell 3. The sFSC used to seed the bounded policy iteration algorithm was chosen to have uniform distribution for I-state transitions and actions. 
Figure \ref{fig:BPI-1} shows the result of the bounded policy iteration in detail. It can be seen that the value of the initial belief increases monotically with successive policy improvement steps, which includes both the optimization of  \eqref{eq:bilinear} and the addition of I-states to escape local maxima, as discussed in Section \ref{sec:escape}.

\begin{figure}
\begin{center}
  \includegraphics[width=.5\textwidth]{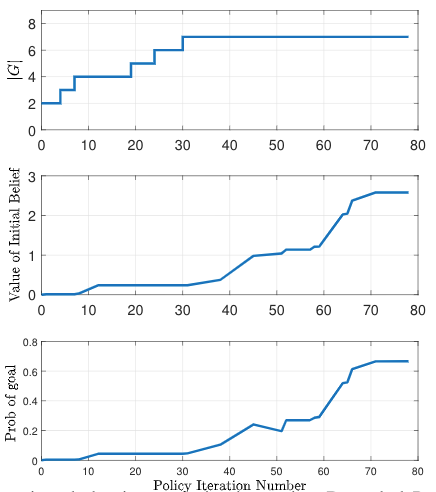}\end{center}\vspace{-.3cm}
\caption[Transient behavior optimization using Bounded Policy Iteration]{Transient behavior optimization using Bounded Policy Iteration. The x-axis denotes the number of policy improvement steps carried out. (top) The growth of the sFSC size.  (middle) The value of the initial belief increases monotonically with each iteration. This value denotes the expected long term discounted reward for the given initial belief. (bottom) Since the goal is to reach cell 6, this sub-figure shows the increase in probability of reaching the goal state within 20 time steps as the sFSC is optimized.}
\label{fig:BPI-1}
\end{figure}

\subsection{Case Study II - Repeated Reachability with Safety}
This case study illustrates how the Bounded Policy Iteration, especially the addition of I-states to the sFSC, improves the steady state behavior of the controlled system.

\noindent
{\bf System Model and LTL specification: } Let $N=3$ and the LTL specification be given by $
\varphi_1 = \boxvoid \diamondsuit a \wedge \boxvoid \diamondsuit b \wedge \boxvoid \neg c.$

\noindent
{\bf Results: } For this example, the controller was seeded with a feasible sFSC of size $|G|$ = 3, with $|G^{ss}|=2$, using the method described in Section~\ref{sec:PIFeasible}. After the first few policy improvement steps, the initial I-state was found to be in $G^{ss}$. By construction, once the sFSC transitions to an I-state in $G^{ss}$ it can no longer visit states in $G^{tr}$, when local maxima was encountered. Subsequently, all new I-states were assigned to $G^{ss}$. The improvement in steady state behavior with the addition of each I-state is shown in Figure \ref{fig:BPI-2}, where it can be seen that the expected frequency of visiting $Repeat_{0}^{\mathcal{PM}^{\varphi}}$ steadily increases with the addition of I-states.

\begin{figure}
\centering
\includegraphics[width=0.5\textwidth]{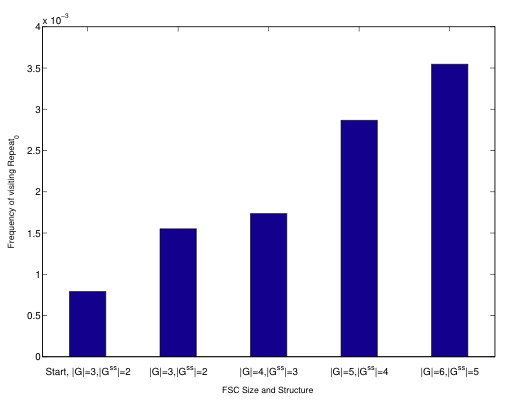}
\caption[Effect of Bounded Policy Iteration on steady state behavior.]{Effect of Bounded Policy Iteration on steady state behavior. Bounded Policy Iteration applied to the specification $\varphi_1$. The above graph shows the improvement in steady state behavior as the size of the sFSC increases. Only states in $G^{ss}$ were allowed to be added. The y-axis denotes the expected \emph{frequency} with which states in $Repeat_0^{\mathcal{PM}^{\varphi}}$ were visited for the product-POMDP.}
\label{fig:BPI-2}
\end{figure}

\section{Conclusions}\label{sec:conclusions}
We proposed a methodology to synthesize sFSCs for POMDPs with LTL specification. We used  the Poisson Equation and  convex relaxations  involving McCormick envelopes to relax a nonlinear optimization problem for designing sFSCs. The stochastic bounded policy iteration algorithm was adapted to the case in which certain states were required to be never visited.  The key benefit of using this variant of dynamic programming was that it allowed for a controlled growth in the size of the sFSC, and could be treated as an anytime algorithm, where the performance of the controller improves with successive iterations, but can be stopped by the user based on time or memory considerations. 

Future research will explore the extension of the proposed method to multi-agent POMDPs~\cite{2019arXiv190307823A} and partially observable stochastic games~\cite{oliehoek2016concise,ACJJKT19}.

\appendix{}
\emph{Proof of Lemma \ref{lem:AbsorptionEqualsEta}:}
\label{sec:lemmaproof}
Consider a finite path fragment $\pi=s_0s_1\dots$ in each of the two Markov chains given by $T^{\varphi}$ and $T^{\varphi}_{mod}$ respectively. Consider the event of visiting a state in $Avoid_{\mathfrak{r}}^{\varphi}$ for the first time at the $k$-th time step. A path that satisfies this can be written as
$
\pi_k = s_0 s_1\dots s_k\dots$ such that $s_0\dots s_{k-1} \not{\in} Avoid_{\mathfrak{r}}^{\varphi} \mbox{ and }s_k\in Avoid_{\mathfrak{r}}^{\varphi}.
$
Then, from the definition of the probability measure of cylinder sets in~\eqref{eq:probcyl}, the probability measures of the cylinder sets under the two Markov chains are identical:
$
\Pr{}_{\mathcal{M}}\left[Cyl_{\mathcal{M}}(\pi_k)\left|\iota_{init}^{\varphi,\mathcal{G}}\right.\right]  =  \iota_{init}^{ss}(s_0)\Pi_{t=1}^{k}T^{\varphi}(s_t|s_{t-1}) 
 =  \iota_{init}^{ss}(s_0)\Pi_{t=1}^{k}T^{\varphi}_{mod}(s_t|s_{t-1}) 
 =  \Pr{}_{\mathcal{M}_{mod}}\left[Cyl_{\mathcal{M}_{mod}}(\pi_k)\left|\iota_{init}^{\varphi,\mathcal{G}}\right.\right]$, 
where $Cyl_{\mathcal{M}}\in Paths(\mathcal{M})$ and $Cyl_{\mathcal{M}_{mod}}\in Paths(\mathcal{M}_{mod})$. The first equality and the second equality follow from the fact that $T_{mod}^{\varphi}(s_j|s_i)=T^{\varphi}(s_j|s_i)$, $\forall s_i\not{\in}Avoid_{\mathfrak{r}}^{\varphi}$ from~(\ref{eq:modifiedT}). Next, note that the probability of paths visiting $Avoid_{\mathfrak{r}}^{\varphi}$ in the l.h.s. of the lemma is given by
\begin{multline*}
\Pr\left[\pi \to (Avoid_{\mathfrak{r}}^{\mathcal{PM}^\varphi} \times G)\left|\iota_{init}^{\varphi,\mathcal{G}}\right.\right]  \\= \underset{k=0}{\overset{\infty}{\sum}}\Pr_{\mathcal{M}}\left[Cyl_{\mathcal{M}}(\pi_k)\left|\iota_{init}^{\varphi,\mathcal{G}}\right.\right] \\
 = \underset{k=0}{\overset{\infty}{\sum}}\Pr_{\mathcal{M}_{mod}}\left[Cyl_{\mathcal{M}_{mod}}(\pi_k)\left|\iota_{init}^{\varphi,\mathcal{G}}\right.\right].
\end{multline*}
In addition, since each state in $Avoid_{\mathfrak{r}}^{\varphi}$ is absorbing under $T_{mod}^{\varphi}$ and has a reward $1$ under the scheme of  \eqref{eq:RecurrenceReward}, for a \emph{given} infinite path $\pi$ of $\mathcal{M}_{mod}$, the long term average sum of rewards can be seen to be $
Rew(\pi) = \lim_{t\to\infty} \frac{1}{T}\left[\sum_{t=0}^{T}r_{\mathfrak{r}}^{av}(s_t)\left|\iota_{init}^{\mathcal{PM}^\varphi}\right.\right] = \left\{
\begin{array}{ll}
1 & \mbox{ if } \pi\to Avoid_{\mathfrak{r}}^{\varphi} \\
0 & \mbox{ otherwise }
\end{array}
\right.
$. 
This happens because if a path visits any state $Avoid_{\mathfrak{r}}^{\varphi}$ it forever remains in that state accumulating a reward of $1$ at each time step. In the limit as time steps grow to infinity, the average reward per step converges to 1.

Finally, taking the expectation of the function $Rew(\pi)$ gives
\begin{multline*}
\eta_{av}(\mathfrak{r})  =  \mathbbm{E}_{mod}\left[Rew(\pi)\right] \\
 =  1. \Pr_{\mathcal{M}_{mod}}\left[\pi\to Avoid_{\mathfrak{r}}^{\varphi}\left|\iota_{init}^{\varphi,\mathcal{G}}\right.\right] + 0. \Pr_{\mathcal{M}_{mod}}\left[\pi \not\to Avoid_{\mathfrak{r}}^{\varphi}\left|\iota_{init}^{\varphi,\mathcal{G}}\right.\right] \\
 =  \underset{k=0}{\overset{\infty}{\sum}}\Pr_{\mathcal{M}_{mod}}\left[Cyl_{\mathcal{M}_{mod}}(\pi_k)\left|\iota_{init}^{\varphi,\mathcal{G}}\right.\right].
\end{multline*}

\bibliographystyle{IEEEtran}
\bibliography{IEEEabrv,ref_short}
\vspace{-1.8cm}
\vspace{-1.8cm}
\end{document}